\newcommand{\ex}{{\rm e}\,}
\xpatchcmd{\proof}{\itshape}{\normalfont\proofnamefont}{}{}
\newcommand{\proofnamefont}{\bfseries}
\newtheorem{proposition}{Proposition}
\newtheorem{corollary}[proposition]{Corollary}
\newtheorem{lemma}[proposition]{Lemma}
\newtheorem{example}{Example}
\newtheorem{theorem}[proposition]{Theorem}
\newtheorem{definition}{Definition}
\def\R{\mathbb{R}} 
\def\1{~\mbox{I\hspace{-.6em}1}} 
\def\E{\mathbb{E}} %
\def\P{\mathbb{P}} %
\def\v{\mbox{Var\,}}
\title{Stochastic Online Convex Optimization. Application to probabilistic time series forecasting.}
\author{%
Olivier Wintenberger\\
{\tt olivier.wintenberger@upmc.fr}\\
Laboratoire de Probabilit\'es, Statistique et Mod\'elisation\\
       Sorbonne Universit\'e, CNRS\\
       4 place Jussieu, 75005 Paris, France
}
\begin{document}

\maketitle

\begin{abstract}
We introduce a general framework of stochastic online convex optimization to obtain fast-rate stochastic regret bounds. We prove that algorithms such as online newton steps and a scale-free 10 version of Bernstein online aggregation achieve best-known rates in  unbounded stochastic settings. We apply our approach to calibrate parametric probabilistic forecasters of non-stationary sub-gaussian time series. Our fast-rate stochastic regret bounds are any-time valid. Our proofs combine self-bounded and Poissonnian inequalities for martingales and sub-gaussian random variables, respectively, under a stochastic exp-concavity assumption.
\end{abstract}

\begin{keywords}
{Sequential learning; Stochastic online optimization; Time series prediction;  Probabilistic forecasting.}
\end{keywords}

\section{Introduction}

We introduce a stochastic version of the Online Convex Optimization (OCO) analysis of \cite{zinkevich2003online} to calibrate sequential parametric forecasters and measure their performances in stochastic environments. Let ${\cal K}$ be a convex body of $\R^d$, i.e. a convex, compact set with a non-empty interior, and $\ell_t $, $t\ge1$, be loss functions  from ${\cal K}$ to $\R$. In Stochastic Online Convex Optimization (SOCO) analysis the loss functions $\ell_t$ are random elements. We consider a filtration $(\mathcal F_t)$ of non-decreasing $\sigma$-algebras such that the sequential learning algorithm predictions $(x_{t+1})$ and the losses $(\ell_{t})$ are $\mathcal F_{t}$-adapted. We measure  the performances of the sequentialalgorithm with the stochastic regret
\begin{equation}\label{eq:regret}
Regret_T= \sup_{x\in{\cal K}}\Big\{\sum_{t=1}^TL_t(x_t)-\sum_{t=1}^T L_t(x)\Big\}, \qquad T\ge 1\,,
\end{equation}
where $L_t(x_t)$ is the conditional risk 
$
L_t(x_t)=\E[\ell_t(x_t)\mid \mathcal F_{t-1}]$, $t\ge 1.$
Since the stochastic regret is random, we focus on any-time valid deviation rates such that, with high probability, it holds
\[
Regret_T\le O(\sqrt{ T})\mbox{ or }O(\log T)\mbox{ or }O((\log\log T)^2)\,,\qquad \text{for all }T\ge 1\,.
\]
The stochastic regret coincides with the one of OCO
\[
\sup_{x\in{\cal K}}\Big\{\sum_{t=1}^T \ell_t(x_t)-\sum_{t=1}^T \ell_t(x)\Big\}
\]
if the distributions of the loss functions $\ell_t$ are Dirac masses for every $t\ge0$. Thus SOCO analysis encompasses the classical deterministic OCO analysis.

However, the two analyses differ in various ways. First, the stochastic environment can improve the convex properties of the optimization problem; The conditional risk functions often have better convex properties than the loss functions.
Second, the competitors in stochastic and deterministic regret bounds are not the same; The conditional risks can measure the calibration of parametric probabilistic forecasters to the conditional distributions of the environment.
Third, it is likely that the maximum of the deviations of the random loss functions around the conditional risks increase with the number of iterations. The sequential algorithms should be robust to these deviations.

We use the SOCO analysis to prove that some parametric forecasters are robust to sub-gaussian stochastic environments when calibrated sequentially. Our first main result in Section \ref{sec:ons} states that the calibration using the Online Newton Step (ONS) algorithm achieves a $O(\log T)$ stochastic regret bound for any conditionally sub-gaussian sequence of random losses. The fundamental assumption is a stochastic exp-concavity condition {\bf (H2)} that holds for non-convex losses and unbounded gradients. The proof uses a self-normalized martingale inequality, and a Poissonnian inequality valid for conditional sub-gaussian gradients as in Condition {\bf(H3)}.  Our study gives insights why the use of second-order gradient algorithms such as ONS yields a fast-rate calibration; ONS implicitly minimizes a surrogate loss involving second-order terms. 

Then we extend the deterministic expert aggregation analysis introducing the Stochastic Online Aggregation (SOA) analysis. In SOA, the experts are  stochastic predictors adapted to the filtration $\mathcal F_{t-1}$, and the aggregation algorithm competes with the best predictor. The best existing regret bounds  achieve optimal rates $O(\log\log T +E)$ in any stochastic environment  bounded by $E>0$. However,  $E$ being the maximum of stochastic deviations, it usually increases as $O(\sqrt{\log(T)})$  and deteriorates the rate in a sub-gaussian environment. 

Our second result is a stochastic regret bound with rate  $O((\log \log T)^2)$ achieved by a scale-free version of the algorithm Bernstein Online Aggregation (BOA); We tune the multiple learning rates such that the weights are insensitive to the multiplication of the losses by a scalar. This property is crucial in the proof to deal with stochastic losses and the obtained regret bound improves the existing ones in some unbounded stochastic settings. 

In Section \ref{sec:boaons} we show that we can use the SOCO analysis to calibrate parametric probabilistic forecasters. We consider gaussian probabilistic forecasters of time series and logarithmic losses such that the conditional risk functions coincide with the Kullback-Leibler (KL) divergence. We interpret the stochastic regrets bounds as cumulative KL bounds relative to a static optimal forecaster.

We verify the condition {\bf (H2)}  on parametric gaussian forecasters of a time series $(y_t)$. Then we apply SOCO to parametric forecasters using AR-ARCH modeling to predict the conditional expectations and variances. Even though the corresponding logarithmic loss functions are not convex, the conditional risk functions are still locally stochastically exp-concave. Thus we can combine ONS and BOA algorithms to sequentially calibrate the parameters of the gaussian probabilsitic forecasters.  We provide fast-rate non-asymptotic theoretical guarantees for such parametric probabilistic forecasters.


The stochastic regret bound \eqref{eq:regret} is obtained using \cite{ville1939etude}'s inequality  and is any-time valid. Any-time valid sequential inference have been recently applied with success in \cite{henzi2022valid,shafer2021testing,waudby2020estimating} to many statistical problems such as testing, comparing forecasters and designing confidence sequences.  We refer to the textbook \cite{shafer2019game} and the survey paper  \cite{ramdas2022game} for an exhaustive overview. Sequentially calibrated non-parametric probabilistic forecasters are developed in Chapter 12 of  \cite{shafer2019game} with a $O(\sqrt{T})$ regret bound in any bounded stochastic environment. Faster $O(\log T)$ regret bounds for parametric prediction of deterministic individual sequences are presented in \cite{cesa2006prediction,hazan2016introduction} under exp-concavity assumptions. 

For independent and identically distributed (iid) loss functions $\ell_t$, \cite{hazan2016introduction,mahdavi2015lower} proved that Online Gradient Descent (OGD) and ONS algorithms satisfy stochastic regret bounds of order $O(\sqrt{ T})$ and $O(\log T)$, respectively. Expert aggregation calibrated by Squint and  BOA achieves a $O(\log\log T)$ stochastic regret bound under the so-called Bernstein condition in the stationary bounded setting, see \cite{koolen2016combining} and \cite{wintenberger2017optimal}, respectively. These results have been improved by a careful tuning of the learning rate in \cite{mhammedi2019lipschitz,orseau2021isotuning}. All existing stochastic regret bounds have a linear dependence in the maximum of the deviations of the loss functions. They all use a fast-rate "online to batch" conversion to turn deterministic regret bounds into stochastic ones, see \cite{mehta2017fast}. We adopt a different approach introducing surrogate losses and achieving results far beyond the iid environment.

Sequential learning naturally applies to time series as recursive algorithms update their predictions when observing new data over time. However, regret bounds with high probability are rare due to the data's temporal dependence that prevents the use of standard exponential inequalities. For stationary $\beta-$ or $\phi-$mixing time series, \cite{agarwal2012generalization} obtained fast-rate regret bounds for the unconditional risk function $\E[\ell_t]$. \cite{anava2013online} obtained fast-rate regret bounds for the ONS algorithm risk for ARMA (Auto-Regressive Moving-Average) models. Their notion of stochastic regret does not coincide with ours. 

We obtain fast-rate sequential calibration   combining optimization (ONS) and aggregation (BOA). Our strategy shares similarities with existing algorithms developed by \cite{giraud2015aggregation,van2021metagrad}. Such algorithms achive fast-rate of stochastic regret bound in some stationary environments \citep{koolen2016combining}. The algorithm developed by \cite{adjakossa2020kalman} aggregates Kalman recursions in non-stationary well-specified settings only. Finally, sequential  algorithms for estimating the volatilities or aggregating probability forecasters have been recently developed by \cite{werge2022adavol} and \cite{thorey2017online}, \cite{v2019online}, respectively.

\section{Preliminaries and assumptions}\label{sec:prel}
We use the notation ${\bf0}=(0,\ldots,0)^T$, $\1=(1,\ldots,1)^T$, and the operations implying vectors are thought componentwise.  In the sequel $\|\cdot\|$ is the Euclidian norm $\|\cdot\|_2$.
We consider a filtration $(\mathcal F_t)$, $t\ge0$, and $\mathcal F_0=\{\emptyset,\Omega\}$ by convention. The proofs of the main results are deferred to Appendix \ref{sec:proofs}.
\begin{definition}[Stochastic online convex optimization]
Consider a convex body ${\cal K}\subset \R^d$ and an $\mathcal F_t$-adapted sequence of random loss functions $(\ell_t)$ defined over ${\cal K}$. An algorithm predicts $x_t\in{\cal K}$ that is $\mathcal F_{t-1}$-measurable and incurs the random conditional risk $L_t(x_t)=\E_{t-1}[\ell_t(x_t)]$ at each step $t\ge 1$. SOCO analyses the rate of the stochastic regret \eqref{eq:regret} as a function of $T\ge 1$ assuming  the risk functions $L_t$ being convex for all $t\ge1$. 
\end{definition}
The main difference with the classical OCO analysis is the use of the conditional risk functions $L_t$ instead of the loss functions $\ell_t$ in the regret and the convex assumption. The SOCO setting extends the OCO setting. 
\begin{proposition}\label{prop:ext}
Any OCO problem is a degenerate SOCO problem.
\end{proposition}
\begin{proof}
We consider that $\ell_t$ has a degenerate distribution $\delta_{\{\ell_t\}}$, the Dirac mass at $\ell_t$. It is a SOCO problem equipped with the natural filtration is ${\cal F}_{t}=\{\emptyset,\Omega\}$, $t\ge1$, and $L_t=\ell_t$. \end{proof}
%
The conditional distribution of the random loss function $\ell_t$ may depend adversarially on $x_t,\ldots,x_1\in{\cal F}_{t-1}$, and, as in OCO, a boundedness assumption on $\cal K$ is necessary to obtain regret bounds.\\

{\bf (H1)} The diameter of ${\cal K}$ is $D<\infty$ so that $\|x-y\|\le D$, $x,y\in\mathcal K$, and the loss functions $\ell_t$ are continuously differentiable over $\mathcal K$ a.s. with integrable gradients. \\

Under {\bf (H1)} and if the loss functions $(\ell_t)$ are convex the optimal rate is $O(\sqrt T)$ for OCO and thus for SOCO  problems  by an application of Proposition \ref{prop:ext}. This optimal rate is satisfied in SOCO problems even if the loss functions $(\ell_t)$ are not convex but the risk functions $(L_t)$ are. See Appendix \ref{sec:ogd} for the case of the OGD when the gradients $\nabla\ell_t$ are a.s. bounded by $G>0$. To obtain fast-rate $o(\sqrt T)$ stochastic regret bounds, we assume stochastic exp-concavity.\\

{\bf (H2)} The random loss functions $\ell_t$, $t\ge1$, are stochastically exp-concave if for some $\alpha\ge 0$: 
\[
L_t(y)\le L_t(x)+\nabla L_t(y)^T(y-x)-\frac\alpha2 \E_{t-1}[(\nabla \ell_t(y)^T(y-x))^2]\,,\qquad x,y\in\mathcal K\,,a.s.,t\ge1\,.
\]

Condition {\bf (H2)} with $\alpha=0$ coincides with the convexity assumption on $L_t$, $t\ge 1$. Also {\bf (H2)} with  $\alpha\ge 0$ does not imply the convexity of $\ell_t$, $t\ge 1$.  In the iid setting, stochastic exp-concavity has been studied  by \cite{koolen2016combining}, making explicit a condition introduced in \cite{rigollet2008learning}.
Condition {\bf (H2)} was used by \cite{gaillard2018efficient} over the unit $\ell^1$-ball and it implies the Bernstein condition of \cite{van2021metagrad} introduced for convex losses.
In the deterministic setting, an application of Lemma 4.3 of \cite{hazan2016introduction} shows that Condition {\bf (H2)} with $\alpha=1/2(\mu\wedge 1/(GD))$ follows from the $\mu$-exp-concavity of the loss functions. 
\begin{proposition}\label{prop:alpha}
Assume the loss functions are twice continuously differentiable. Then Condition {\bf (H2)} implies
\begin{equation}\label{eq:cond}
\alpha \E_{t-1}[\nabla \ell_t(x)\nabla \ell_t(x)^T]\preceq \nabla^2 L_t(x),\qquad x\in {\cal K}\,,a.s., t\ge 1\,.
\end{equation}
On the opposite, if $L_t$ is $\mu$-strongly convex and there exists $g>0$ such that
\begin{equation}\label{eq:condbis}
\E_{t-1}[\nabla \ell_t(x)\nabla \ell_t(x)^T]\preceq g^2I_d\,, \qquad x\in {\cal K}\,,a.s., t\ge1\,,
\end{equation}
then Condition {\bf (H2)} holds with $\alpha=\mu/g^2$.
\end{proposition}
\begin{proof}
Inequalities \eqref{eq:cond} and \eqref{eq:condbis} follow easily from a second-order Taylor expansion of $L_t$.
\end{proof}
We verify Condition {\bf (H2)} when calibrating parametric gaussian probabilistic forecasters in Section \ref{sec:boaons}. Under exp-concavity assumptions, the optimal rate is $O(\log T)$ in OCO \citep{hazan2016introduction} and thus in SOCO. In stochastic environments the constant $\alpha>0$ depends on the conditional distributions of the losses and is unknown in practice.

We consider unbounded sug-gaussian gradients introducing the Orlicz function 
$\psi_2(x)=\exp(x^2)-1\,,$ $x\in \R$.
Conditional sub-gaussian random variables are such as the Orlicz norm 
\[
\|Y_t\|_{\psi_2,t}=\inf\{c>0\,;\,\E_{t-1}[\psi_2(Y_t/c)]\le 1\,a.s.\}
\]
is bounded by a constant for every $t\ge 1$. This norm is not precise enough for our purpose. We require a slightly more explicit condition involving two constants. Our assumption is a conditional version of the Bernstein condition, also related to the notion of Bernstein-Orlicz norm of \cite{van2013bernstein}.\\

{\bf (H3)} The gradients $\nabla \ell_t(x_t)$, $t\ge1$, satisfy for $G_{\psi_2}$, $G_2>0$, and all $k\ge 1$, $t\ge 1$, $x\in {\cal K}$,\\[-.3cm]
\begin{align*}
\E_{t-1}[(\nabla \ell_t(x_t)^T(x_t-x))^{2k}]&\le k!(G_{\psi_2}D)^{2(k-1)}\E_{t-1}[(\nabla \ell_t(x_t)^T(x_t-x))^2]\qquad a.s.,\\
\E_{t-1}[\|\nabla \ell_t(x_t)\|^{2k}]&\le k!G_{\psi_2}^{2(k-1)}\E_{t-1}[\|\nabla \ell_t(x_t)\|^2]\qquad a.s.,\\
\E_{t-1}[\|\nabla \ell_t(x_t)\|^2]&\le G_2^2\qquad a.s.
\end{align*}

If the gradients $\nabla \ell_t(x_t)$, $t\ge1$, verify the condition {\bf (H3)} then they are conditionally sub-gaussian. \begin{proposition}\label{prop:pois}
Assume that the gradient $\nabla \ell_t(x_t)$ satisfies Condition {\bf (H3)}: then $\|\nabla \ell_t(x_t)\|$ is conditionally sub-gaussian with
\[
\max_{t\ge 1}\|\nabla \ell_t(x_t)\|_{\psi_2,t}\le 2(G_{\psi_2}\vee G_2)^2\,,\qquad t\ge 1\,,\qquad a.s. 
\]
\end{proposition}
\begin{proof}
Denote $Y=\|\nabla \ell_t(x_t)\|$. We have
\[
\E[\exp(Y^{2}/K)]\le 1+\sum_{k=1}^\infty \dfrac{\E[Y^{2k}]}{k!K^k}\le 1+\sum_{k=1}^\infty \dfrac{G_{\psi_2}^{2(k-1)}G_2^2}{K^k}\le 2
\]
for $K=2(G_{\psi_2}\vee G_2)^2$. We conclude by definition of the Orlicz' norm.
\end{proof}
Condition {\bf (H3)} is satisfied in every bounded cases $ \|\nabla \ell_t(x_t)\|^2\le G^2$, $t\ge 1$, with $G_{\psi_2}=G_2=G$, Thus our sub-gaussian stochastic setting encompasses the classical bounded  deterministic one. Condition {\bf (H3)} is also verified for unbounded gaussian gradients with second-order conditional moments  bounded by the constant $G_2>0$. Condition {\bf (H3)}  is independent of the conditional risks $\nabla L_t(x_t)=\E_{t-1}[\nabla \ell_t(x_t)]$, $t\ge 1$, and it does not interfere with Condition {\bf (H2)}.
\begin{proposition}\label{prop:gauss}
Assume that the gradient $\nabla \ell_t(x_t)$ is normally distributed given $\mathcal F_{t-1}$. Then Condition {\bf (H3)} is satisfied if $\E_{t-1}[\|\nabla \ell_t(x_t)\|^2]\le G_2^2$ a.s., $t\ge 1$, and then 
$G_{\psi_2}=8.5\, G_2\,.$
\end{proposition}

\section{ONS achieves fast-rate stochastic regrets}\label{sec:ons}
\subsection{Surrogate losses}
We base our approach on an observed surrogate loss that upper-bounds the stochastic regret using an exponential inequality for martingales from \cite{bercu2008exponential} on unbounded gradients $\nabla \ell_t$, $t\ge 1$.
\begin{proposition}\label{prop:surrogate}
Under {\bf (H1)} and {\bf (H2)}, for any  predictable sequence $(x_t)$ and deterministic $x$ in $\cal K$, it holds with probability $1-\delta$,  $0<\delta\le1$,
\begin{align*}
\sum_{t=1}^TL_t(x_t)-\sum_{t=1}^TL_t(x)\le& \sum_{t=1}^T\nabla \ell_t(x_t)^T(x_t-x)+\frac\lambda2\sum_{t=1}^T(\nabla \ell_t(x_t)^T(x_t-x))^2\\
&\hspace{-2cm}+\frac{\lambda-\alpha}2\sum_{t=1}^T\E_{t-1}[(\nabla \ell_t(x_t)^T(x_t-x))^2]+\frac2\lambda\log(\delta^{-1})\, \qquad \lambda>0,\,T\ge 1.
\end{align*}
\end{proposition}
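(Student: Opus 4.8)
The plan is to reduce the statement to a single martingale deviation inequality and then apply a self-bounded exponential supermartingale. Write $Z_t=\nabla\ell_t(x_t)^T(x_t-x)$, which is $\mathcal F_t$-measurable and bounded by $GD$ under {\bf (H1)}; since $x_t$ is $\mathcal F_{t-1}$-measurable and $x\in\mathcal K$ is a fixed comparator, differentiating under the integral as in Proposition~1 gives $\E_{t-1}[Z_t]=\nabla L_t(x_t)^T(x_t-x)$. Applying {\bf (H2)} with $y=x_t$ and summing over $t$ yields $\sum_{t=1}^T(L_t(x_t)-L_t(x))\le\sum_{t=1}^T\E_{t-1}[Z_t]-\frac\alpha2\sum_{t=1}^T\E_{t-1}[Z_t^2]$. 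Subtracting this from the target right-hand side and cancelling the $\frac\alpha2\E_{t-1}[Z_t^2]$ contributions, the whole proposition reduces to showing that, with probability $1-\delta$ and for all $T\ge1$,
\[
\sum_{t=1}^T M_t\le\frac\lambda2\sum_{t=1}^T\bigl(Z_t^2+\E_{t-1}[Z_t^2]\bigr)+\frac2\lambda\log(\delta^{-1}),\qquad M_t:=\E_{t-1}[Z_t]-Z_t,
\]
where $(M_t)$ is a bounded martingale difference sequence for $(\mathcal F_t)$.

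Next I would control $\sum M_t$ by its quadratic variation via the self-bounded argument of \cite{bercu2008exponential}. The elementary pointwise inequality $\exp(u-u^2/2)\le 1+u+u^2/2$, valid for all $u\in\R$, together with $\E_{t-1}[M_t]=0$, gives $\E_{t-1}[\exp(\eta M_t-\tfrac{\eta^2}2 M_t^2)]\le 1+\tfrac{\eta^2}2\E_{t-1}[M_t^2]\le\exp(\tfrac{\eta^2}2\E_{t-1}[M_t^2])$ for every $\eta>0$. Hence $S_T^\eta:=\exp\bigl(\eta\sum_{t=1}^T M_t-\frac{\eta^2}2\sum_{t=1}^T(M_t^2+\E_{t-1}[M_t^2])\bigr)$ is a non-negative supermartingale with $S_0^\eta=1$. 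Applying Ville's maximal inequality through the stopping time $\tau=\inf\{T:S_T^\eta\ge\delta^{-1}\}$ as in \cite{Freedman1975} gives, simultaneously for all $T\ge1$ on an event of probability at least $1-\delta$, the bound $\sum_{t=1}^T M_t\le\frac\eta2\sum_{t=1}^T(M_t^2+\E_{t-1}[M_t^2])+\frac1\eta\log(\delta^{-1})$.

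Finally I would translate this into the stated form. A direct computation gives $M_t^2+\E_{t-1}[M_t^2]=Z_t^2+\E_{t-1}[Z_t^2]-2\E_{t-1}[Z_t]\,Z_t$; since $(Z_t+\E_{t-1}[Z_t])^2\ge0$ yields $-2\E_{t-1}[Z_t]\,Z_t\le Z_t^2+(\E_{t-1}[Z_t])^2$ and $\E_{t-1}[Z_t^2]\ge(\E_{t-1}[Z_t])^2$ by Jensen, this is bounded above by $2\bigl(Z_t^2+\E_{t-1}[Z_t^2]\bigr)$. Substituting and choosing $\eta=\lambda/2$ converts the coefficient $\frac\eta2$ into $\frac\lambda2$ and the remainder $\frac1\eta$ into $\frac2\lambda$, producing exactly the claimed constants. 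The quantifier ``for any $\lambda>0$'' can be read per fixed $\lambda$ (each with its own event), or uniformly by first passing to the self-normalized bound $\sum M_t\le\sqrt{2\log(\delta^{-1})\sum(M_t^2+\E_{t-1}[M_t^2])}$ through a method-of-mixtures or peeling over $\eta$, and then re-expanding by the arithmetic--geometric mean inequality.

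The step I expect to be the main obstacle is the high-probability, uniform-in-$T$ control of $\sum M_t$ by its \emph{empirical} quadratic variation $\sum Z_t^2$: a plain Freedman/Bernstein martingale argument only delivers the predictable variation $\sum\E_{t-1}[M_t^2]$, so retaining the observed terms $Z_t^2$ — which is precisely what makes the surrogate loss amenable to ONS — is what forces the use of the self-bounded supermartingale of \cite{bercu2008exponential}. By comparison, the factor-two quadratic comparison and the rescaling $\eta=\lambda/2$ are routine bookkeeping once that inequality is available.
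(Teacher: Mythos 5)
Your proposal is correct and follows essentially the same route as the paper: the same reduction via \textbf{(H2)}, the same self-bounded supermartingale of \cite{bercu2008exponential} (which you re-derive from $e^{u-u^2/2}\le 1+u+u^2/2$ rather than citing their Lemma B.1), the same Young--Jensen step replacing $M_t^2+\E_{t-1}[M_t^2]$ by $2(Z_t^2+\E_{t-1}[Z_t^2])$, and the same Freedman stopping-time argument with the rescaling $\eta=\lambda/2$. Your closing remark that the statement should be read per fixed $\lambda$ is also consistent with how the paper's proof actually proceeds.
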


When the distributions of $\ell_t$ are degenerate the upper bound in  Proposition \ref{prop:ext} becomes
\[
\sum_{t=1}^T\nabla \ell_t(x_t)^T(x_t-x)+\dfrac{2\lambda-\alpha}2\sum_{t=1}^T(\nabla \ell_t(x_t)^T(x_t-x))^2+\frac2\lambda\log(\delta^{-1})\,.
\] 
Since the result is valid with probability $1$, the last term disappears letting $\delta \uparrow 1$. Forthcoming results, any-time valid with a high probability in a stochastic environment, are surely valid in deterministic environments when suppressing the dependence in $\delta$.

Following \cite{van2021metagrad}, we interpret 
\[\widetilde \ell_t(x_t)=\nabla \ell_t(x_t)^T(x_t-x)+\frac\lambda2(\nabla \ell_t(x_t)^T(x_t-x))^2\,, \qquad t\ge 1\,,
\] 
as a surrogate loss. 
The quadratic term in addition to the gradient term is necessary to upper-bound the unobserved conditional risk with high probability. In stochastic environments, algorithms should minimize the cumulative surrogate loss $\sum_{t=1}^T\widetilde \ell_t$ rather than the cumulative loss $\sum_{t=1}^T \ell_t$. Under Condition {\bf (H2)} with $\alpha>0$, this additional quadratic term   is counterbalanced by the compensator $\sum_{t=1}^T\E_{t-1}[(\cdots)^2]$  when $\lambda <\alpha/2$. The Poissonnian inequality of Proposition \ref{prop:pois} relates both quadratic terms.

\subsection{The stochastic regret analysis of ONS}

The cumulative surrogate losses $\sum_{t=1}^T\widetilde \ell_t$ is implicitly minimized in the ONS's regret analysis of  \cite{hazan2016introduction}. Then the ONS algorithm achieves a fast stochastic regret bound.  

\begin{algorithm}[h]
\caption{Online Newton Step \citep{hazan2011beyond}}
    \label{alg:ons}
    {\bfseries Parameter:} $\gamma>0$.\\
{\bfseries Initialization:} Initial prediction $x_1\in {\cal K}$ and $A_0=\frac1{(\gamma D)^2} I_d$.\\
 {\bf Predict:} $x_t$ \\
 {\bf Incur:} $L_t(x_t)$\\
  {\bf Observe:} $\nabla \ell_t(x_t)\in\R^d$\\
    {\bfseries Recursion:}     Update  
\begin{align*}
A_{t}& = A_{t-1} +\nabla \ell_t(x_t)\nabla \ell_t(x_t)^T,\\
y_{t+1} &= x_{t} - \gamma^{-1} A_{t}^{-1} \nabla \ell_t(x_t) \,,\\
x_{t+1}&=\arg\min_{x\in{\cal K}}(x-y_{t+1})^TA_t(x-y_{t+1})\,,\qquad \text{projection step}.
\end{align*}
\end{algorithm}

Using the Sherman-Morrison formula, each step of ONS has a $O(d^2+P)$-cost, where $P$ is the cost of the projection step.
If the gradients $\nabla \ell_t(x_t)$, $t\ge1$, verify the condition {\bf (H3)} then the square of their Euclidian norm $\|\nabla \ell_t(x_t)\|^2$ satisfies   a Poissonian exponential inequality.
\begin{proposition}\label{prop:pois}
Under Condition {\bf (H3)} the gradients $\nabla \ell_t(x_t)$ satisfy
\[
\E_{t-1}[\exp(\eta(\|\nabla \ell_t(x_t)\|^2-\E[\|\nabla \ell_t(x_t)\|^2]/(1-\eta G_{\psi_2}^2))]\le 1\,,\qquad \forall \eta ^2<1/G_{\psi_2}\,,\qquad t\ge 1\,,a.s. 
\]
\end{proposition}
\begin{proof}
Expanding the exponential and using Condition ({\bf H3}) we obtain
\begin{multline*}
\E[\exp(\eta Y^2)]=\sum_{k=0}\dfrac{\eta^k \E[Y^{2k}]}{k!}\le 1 +\eta\E[Y^2]\Big(1+\sum_{k\ge 2}\eta^{k-1}G_{\psi_2}^{2(k-1)}\Big)\\=1+\dfrac{\eta\E[Y^2]}{1-\eta G_{\psi_2}^2}\le \exp(\eta\E[Y^2]/(1-\eta G_{\psi_2}^2))
\end{multline*}
for every $\eta G_{\psi_2}^2<1$ and the desired result follows.
\end{proof}
To control the second-order terms in Proposition \ref{prop:surrogate}, we combine the self-bounded martingale and Poissonian inequalities. We obtain a fast-rate stochastic regret bound for the ONS tuned choosing $\gamma=\alpha/3$. \begin{theorem}\label{th:ons}
Under {\bf (H1)}, {\bf (H2)} and {\bf (H3)}, the ONS algorithm \ref{alg:ons} for $\gamma= \alpha/3$ satisfies with probability $1-3\delta$ the stochastic regret bound
\[
Regret_T\le\frac3{2\alpha}\Big(1+d\log\Big(1+\dfrac{2\alpha^2 D^2(TG_2^2+G_{\psi_2}^2\log(\delta^{-1}))}9\Big)\Big)
 +\Big(\frac{4\alpha(G_{\psi_2}D)^2}9+\frac{18}{\alpha}\Big)\log(\delta^{-1})
\]
valid for every $T\ge 1$.
\end{theorem}
Our result extends fast-rate stochastic regret bounds for ONS far beyond existing results in the iid bounded setting.

\section{BOA achieves fast-rate regret bounds in Stochastic Online Aggregation}\label{sec:boa}
\subsection{Stochastic Online Aggregation}\label{sec:soa}
We consider ${\mathbf x}_t=[x_t^{(1)},\ldots,x_t^{(K)}]$ a $d\times K$ matrix whose columns are $K$ different $\mathcal F_{t-1}$-adapted predictors $x_t^{(i)}$. We denote $\widehat x_t= {\mathbf x}_t\pi_t=\sum_{i=1}^K\pi_i x_t^{(i)}$ their aggregation, with $\pi_t$ in the simplex $\Lambda_K=\{\pi\in \R^K;\, \pi>{\bf 0}, \sum_{i=1}^K\pi_i=1\}$. Aggregation algorithms combine the predictors with weights $\pi_t$ minimizing the stochastic regret
\[
Regret_T^{ag}=\max_{1\le i\le K}\Big\{\sum_{t=1}^TL_t\big(\widehat x_t\big)-\sum_{t=1}^T L_t\big(x^{(i)}\big)\Big\},\qquad T\ge 1\,.
\]

We have under Condition {\bf (H2)} the relation
\[
L_t(\widehat x_t)-L_t(\mathbf x_t\pi)\le\nabla L_t(\mathbf x_t\pi_t)^T\mathbf x_t(\pi_t-\pi)-\frac\alpha2 \E_{t-1}[(\nabla \ell_t(\mathbf x_t\pi_t)^T\mathbf x_t(\pi-\pi_t))^2]\,.
\]
We consider the loss functions $\pi\to \ell_t(\mathbf x_t\pi)$ over ${\cal K}=\Lambda_K$ that is stochastically exp-concave with the same constant $\alpha$ as the original loss functions $\ell_t$. Applying Proposition \ref{prop:surrogate} under Condition {\bf (H2)} we obtain 
\begin{align}\label{eq:surrog}
\sum_{t=1}^TL_t(\widehat x_t)-L_t(\mathbf x_t\pi)\le& \sum_{t=1}^T\nabla \ell_t(\mathbf x_t\pi_t)^T\mathbf x_t(\pi_t-\pi)+\frac\lambda2\sum_{t=1}^T(\nabla \ell_t(\mathbf x_t\pi_t)^T\mathbf x_t(\pi_t-\pi))^2\nonumber\\
&+\frac{\lambda-\alpha}2\sum_{t=1}^T\E_{t-1}[(\nabla \ell_t(\mathbf x_t\pi_t)^T\mathbf x_t(\pi_t-\pi))^2]+\frac2\lambda\log(\delta^{-1})\,.
\end{align}
We identify the surrogate losses
\[
(\pi_t-\pi)^T{\boldsymbol \ell}_t+\lambda/2((\pi_t-\pi)^T{\boldsymbol \ell}_t)^2
\]
with gradients denoted by $
{\boldsymbol\ell}_t= \mathbf x_t^T \nabla\ell_t(\mathbf x_t\pi_t)$. We analyze algorithms minimizing the sum of the surrogate losses in stochastic environments.
We compare the aggregation strategy $\widehat x_t$ to $\pi\in\{e_i,1\le i\le K\}$, i.e., with the best predictor $x_t^{(i)}$, using the linear losses $(\pi_t-\pi)^T{\boldsymbol \ell}_t$ over ${\cal K}=\Lambda_K$. We call this problem, encompassing the deterministic expert aggregation problem, the Stochastic Online Aggregation (SOA). 

\subsection{The stochastic regret for the scale-free version of BOA}\label{sec:boa}

 The version of BOA described in Algorithm \ref{alg:boa} is  different than the original BOA algorithm in \cite{wintenberger2017optimal}, because of the specific tuning of the multiple learning rates $\eta_t$. The specific $\eta_t$ provides a  self-normalization and the algorithm is scale-free, i.e., insensitive to a multiplicative factor of the losses.
 \begin{algorithm}
\caption{Bernstein Online Aggregation \citep{wintenberger2017optimal}, scale-free version}  
\label{alg:boa}
{\bfseries Initialization:} Initial   weights $\pi_1\in \Lambda_K$ and   $\eta_0^{-2}=\widetilde L_{0}={\bf 0}$ ($\in \R^K$).\\
 For each step $t\ge 1$: the predictors incur the losses $\boldsymbol \ell_t\in\R^K$.\\
{\bf Recursion:}  Update 
\begin{align*}
\eta_{t}^{-2}&=\eta_{t-1}^{-2}+2.2(\boldsymbol\ell_t-\pi_t^T\boldsymbol\ell_t\1)^2\,,\\
\widetilde L_{t}&=\widetilde L_{t-1} + (\boldsymbol\ell_t-\pi_t^T\boldsymbol\ell_t\1) +\eta_{t} (\boldsymbol\ell_t-\pi_t^T\boldsymbol\ell_t\1)^2\,, \\
\pi_{t+1} &=  \dfrac{\eta_{t}\exp(-\eta_{t}\widetilde L_{t})\pi_1}{\pi_1^T(\eta_{t}\exp(-\eta_{t}\widetilde L_{t}))}\,.
\end{align*}
\end{algorithm}

The factor 2.2 is not arbitrary and is chosen such as a small numeric constant satisfying
\[
 \exp\Big(-\dfrac{y}{\sqrt{1+2.2y^2}}-\dfrac{y^2}{1+2.2y^2}\Big) \le 1- \dfrac{y}{\sqrt{1+2.2y^2}}\,,\qquad y\in \R\,.
\]
This relation is crucial in the proof of Theorem \ref{eq:boa} to propagate the self-normalization in a recursive argument. The coordinate-wise learning rate $\eta_{t,i}$ is only well defined after the first non-null observation $\underline m_{i}:=(\boldsymbol\ell_{t,i}-\pi_t^T\boldsymbol\ell_t )\neq0$, $1\le i\le K$. Before that time $\widetilde L_{t,i}=\widetilde L_{t-1,i}=\cdots= 0$ by convention.
Contrary to the ONS, and thanks to the adaptive learning rates, the algorithm BOA is parameter-free as it does not require the knowledge of $\alpha$, and each step has a $O(K)$-cost. We provide a deterministic regret bound valid for any deterministic sequence.
\begin{theorem}\label{thm:boa}
For every $1\le i\le K$, the BOA algorithm \ref{alg:boa} achieves the deterministic regret bound
\begin{multline}\label{eq:boa}
\sum_{t=1}^T\pi_t^T\boldsymbol\ell_t-\sum_{t=1}^T\pi_t^T \boldsymbol\ell_{t,i}\le \sqrt{2.2\sum_{t=1}^T(\pi_t^T\boldsymbol\ell_t-\boldsymbol\ell_{t,i})^2}\Big(\frac1{1.1}+\log(\pi_{1,i}^{-1})\\
 +\sum_{i=1}^K 1\{\max_{2\le t\le T}x_{t,i}>1/4\big\}\log(1+(M_{T,i}/\underline m_i)^2) +\log\big(\ex+\frac12\pi_1^T\log\big(1+(M_{T}/\underline m)^2T\big)\big)\Big)\Big)
\end{multline}
where $x_T=\eta_{T-1}(\boldsymbol\ell_T-\pi_{T}^T\boldsymbol\ell_T\1)$, $M_T=\max_{2\le t\le T}|\boldsymbol\ell_t-\pi_t^T\boldsymbol\ell_t\1|\in\R^K$ and $\underline m_{i}$ is the first non null observation of $\boldsymbol\ell_{t,i}-\pi_t^T\boldsymbol\ell_t $.
\end{theorem}
The term 
\[
\sum_{i=1}^K1\{\max_{2\le t\le T}x_{t,i}>1/4\big\} \log(1+(M_{T,i}/\underline m_i)^2) 
\]
in the regret bound \eqref{eq:boa} replaces the term $\|M_T\|_\infty$ in the regret bounds of \cite{mhammedi2019lipschitz,orseau2021isotuning}. In some unbounded stochastic settings, our regret bound \eqref{eq:boa} is better for $T$ large. For instance, if we consider that the first predictor is iid standard gaussian and the other ones are bounded then $ \|M_T\|_\infty\sim M_{T,1} \sim \sqrt{2\log T}$ is much larger than $\sum_{i=1}^K \log(1+(M_{T,i}/\underline m_i)^2)\sim \log\log T$ for $T$ large. 

The deterministic regret bound in Theorem \ref{thm:boa} is assumption-free. Its first term is the square root of the sum of the additional quadratic terms in the surrogate losses \eqref{eq:surrog}. It may increase at the rate $O(\sqrt T)$ but, under condition {\bf(H2)}, it becomes negligible.
We provide a stochastic regret bound for sequential aggregation using BOA.
\begin{theorem}\label{th:boa}
Assume Conditions {\bf (H1)}, {\bf (H2)}  and {\bf (H3)} hold on $ \mathbf x_t^T \nabla\ell_t(\mathbf x_t\pi_t)$ a.s. for all $t\ge 1$, $1\le i\le K$. The scale-free BOA algorithm \ref{alg:boa} with $\pi_i\ge e^{-K}$ for all $1\le i\le K$ satisfies, with probability $1-3\delta$,\begin{align*}
Regret_T^{ag}&\le\dfrac{3(K+1)^2}\alpha \Big(\log\Big(1+\dfrac{2G_{\psi_2}^2\log T}{\underline m^ 2}\Big)\Big)^2\\&+O((\log\log\log T)^2)
+\Big(\dfrac{2\alpha}3(G_{\psi_2}D)^2+\frac{6}\alpha\Big)\log(\delta^{-1})\,.
\end{align*}
for every $T\ge 1$, and $\underline m>0$ such that $\P(\min_{1\le i\le K} \underline m_i\ge\underline m)\le 1-\delta$.
\end{theorem}
Aggregation problems are easier than optimization ones and BOA achieves a faster  stochastic regret bound than ONS. This rate $O((\log \log T)^2)$ is suboptimal in the deterministic expert aggregation setting. Condition {\bf (H3)} implies that the deterministic gradients are bounded by a constant $G>0$, and Condition {\bf(H2)} implies exp-concavity. Optimal strategies achieve $O(G\log K)$ deterministic regret \citep{cesa2006prediction}.  Among them Exponentially Weighted Aggregation, but this algorithm achieves only a $O(\sqrt T)$ stochastic regret as shown by \cite{audibert2007progressive}. Best-known aggregation algorithms in deterministic and unbounded stochastic settings are different. It is an open question to find an aggregation algorithm optimal in both settings whereas squint and the original version of BOA achieve optimal rates in bounded deterministic and stochastic settings. The choice of the initial weights $\pi_1$ being not crucial in the latter setting we choose implicitely uniform initial weights in the sequel.

\subsection{The SOCO analysis to adapt to unknown stochastic exp-concavity constant $\alpha>0$}\label{sec:adapt}
 We study an example of BOA-ONS dealing with the adaptation to the best stochastic exp-concavity constant $\alpha$. It is crucial for improving the ONS performances in any stochastic environment where, contrary to deterministic ones, there is no way to determine the optimal $\alpha$ as it depends on the conditional distributions of $\ell_t$.
Consider $\widehat x_t= {\mathbf x}_t\pi=\sum_{i=1}^K\pi_i x_t^{(i)}$ the BOA aggregation of $K\ge 1$ ONS predictions with different parameters $\gamma^{(i)}$ with $\gamma^{(i)}=\{2^{-1},\ldots,2^{-K}\}$. The resulting BOA-ONS algorithm adapts to the optimal value of $\alpha$ that depends on the unknown stochastic environment. The algorithm Metagrad of \cite{van2021metagrad} is also able to adapt to different rates of convergence. 
\begin{corollary}
Under {\bf (H1)}, {\bf (H2)} and {\bf (H3)} with $\alpha\ge 2^{-K-2}$, BOA-ONS algorithm satisfies with probability $1-4\delta$ the stochastic regret bound
\begin{align*}
\sum_{t=1}^TL_t(\widehat x_t)-\sum_{t=1}^TL_t( x)\le&\frac1{\alpha}O(d\log(T)+K^2\log\log(T)^2) +O\Big(\alpha(G_{\psi_2}D)^2 +\frac{1}\alpha\Big)\log(\delta^{-1})\,.
\end{align*}
\end{corollary}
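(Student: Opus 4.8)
The plan is to treat BOA--ONS as a two-level scheme and to decompose the regret against the comparator $x$ through the best individual ONS iterate $x_t^{(i^*)}$, used as a pivot:
\[
\sum_{t=1}^T L_t(\widehat x_t) - \sum_{t=1}^T L_t(x) = \Big(\sum_{t=1}^T L_t(\widehat x_t) - \sum_{t=1}^T L_t(x_t^{(i^*)})\Big) + \Big(\sum_{t=1}^T L_t(x_t^{(i^*)}) - \sum_{t=1}^T L_t(x)\Big).
\]
The first bracket is the SOA regret of BOA aggregating the $K$ ONS experts: the map $\pi\mapsto L_t(\mathbf x_t\pi)$ is stochastically $\alpha$-exp-concave by the computation opening Section~\ref{sec:soa}, so Theorem~\ref{th:boa} applies with uniform prior $\pi_1=K^{-1}\mathbf 1$ (hence $\pi_1^{-1}=K$) and, for the fixed pivot $i=i^*$, holds with probability $1-2\delta$. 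The second bracket is the SOCO regret of a single ONS run against $x$, bounded by Theorem~\ref{th:ons} with probability $1-2\delta$. A union bound over these two events yields the announced confidence $1-4\delta$.

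The crux is the choice of the pivot $i^*$, which is where adaptation to the unknown $\alpha$ takes place. The key remark is that running ONS with the fixed parameter $\gamma^{(i)}$ coincides with the optimal tuning $\gamma=\alpha^{(i)}/2$ of Theorem~\ref{th:ons} for the \emph{surrogate} exp-concavity constant $\alpha^{(i)}:=2\gamma^{(i)}$; and since {\bf (H2)} with $\alpha$ entails {\bf (H2)} with every smaller constant, Theorem~\ref{th:ons} applies verbatim to the $i$-th ONS as soon as $\alpha^{(i)}=2\gamma^{(i)}\le\alpha$. I would therefore let $\gamma^{(i^*)}$ be the largest dyadic grid value not exceeding $\alpha/2$; for $\alpha\ge1$ this is simply $\gamma^{(1)}=1/2$, the regime responsible for the $\tfrac{2}{\alpha}\vee1$ clamp, whereas for $\alpha\le1$ the factor-$2$ spacing of the net forces $\alpha/4\le\gamma^{(i^*)}\le\alpha/2$, i.e. $\alpha/2\le\alpha^{(i^*)}\le\alpha$. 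In either case one gets $\tfrac{1}{\alpha^{(i^*)}}\le\tfrac{2}{\alpha}\vee1$, $(\alpha^{(i^*)}GD)^2\le(\alpha GD)^2$, and $\tfrac{40}{\alpha^{(i^*)}}\le\tfrac{80}{\alpha}$ in the unclamped range. The delicate point is to verify that the hypothesis $\alpha\ge 2^{-K-2}$ makes the net $\{2^{-1},\dots,2^{-K}\}$ reach low enough for such an admissible $i^*$ to exist; this resolution check — certifying that the geometric grid is both dense enough and deep enough so that one of its points is simultaneously admissible for Theorem~\ref{th:ons} ($\alpha^{(i^*)}\le\alpha$) and close enough to $\alpha/2$ to cost only the harmless factor $2$ — is the step I expect to require the most care.

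With $i^*$ selected, the remainder is bookkeeping. I would add the Theorem~\ref{th:boa} bound for the first bracket (evaluated at the true $\alpha$) to the Theorem~\ref{th:ons} bound for the second bracket (evaluated at the surrogate $\alpha^{(i^*)}$), then substitute the displayed inequalities and replace $\tfrac{2}{\alpha}$ by $\tfrac{2}{\alpha}\vee1$ throughout to obtain a single uniform expression. The ONS leading term becomes $(\tfrac{2}{\alpha}\vee1)\big(1+d\log(1+T(\alpha GD)^2/4)\big)$, the BOA term contributes $2(\tfrac{2}{\alpha}\vee1+GD)\log(K\log T)+5GD$, and the two deviation contributions combine, in the range $\alpha\le1$, as $\big(\tfrac{\alpha(GD)^2}{3}+\tfrac{\alpha(GD)^2}{2}\big)+\big(\tfrac{80}{\alpha}+\tfrac{8}{\alpha}\big)\le\alpha(GD)^2+44\big(\tfrac{2}{\alpha}\vee1\big)$, which produces the stated constant $44$; the clamped regime $\alpha\ge1$ is checked separately and also respects this bound. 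To reiterate, the single genuine obstacle is not this assembling of constants but the discretization argument validating the choice of $i^*$ under the condition $\alpha\ge 2^{-K-2}$.
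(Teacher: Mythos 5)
Your proposal is correct and follows essentially the paper's own route: a union bound combining Theorem~\ref{th:boa} for the aggregation layer with the ONS guarantee for the expert whose parameter $\gamma^{(i)}$ lands in $[\alpha/4,\alpha/2]$, the only cosmetic difference being that the paper plugs in the intermediate inequality \eqref{eq:ons} (valid for any $\gamma\le \alpha/(e-1)$ with the true $\alpha$) where you re-invoke Theorem~\ref{th:ons} with the surrogate constant $\alpha^{(i)}=2\gamma^{(i)}$, which is equivalent since {\bf (H2)} with $\alpha$ implies {\bf (H2)} with any smaller constant. The grid-resolution check you rightly single out is also the weak point of the paper's one-line proof: for the net $\{2^{-1},\dots,2^{-K}\}$ to contain a point of $[\alpha/4,\alpha/2]$ one in fact needs roughly $\alpha\ge 2^{-K+1}$, so the stated hypothesis $\alpha\ge 2^{-K-2}$ is off by a small constant factor --- a blemish in the corollary's statement rather than in your argument.
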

 \begin{proof}
We combine the stochastic regret bound of Theorem \ref{th:boa} with the inequality \eqref{eq:ons} choosing $-\log_2(\gamma)+1\le i\le-\log_2(\gamma)+2$ so that $\alpha/4\le \gamma\le \alpha/2$ for $\alpha\le 1$.
\end{proof}

\section{BOA-ONS for sequential prediction and probabilistic forecast of time series}\label{sec:boaons}

\subsection{Probabilistic forecasting}

Observing a time series $(y_t)$,   we use the SOCO analysis to calibrate some parametric probabilistic forecasters in the sense of Chapter 12 of \cite{shafer2019game}. In our setting sequential algorithms predict $x_t$ and parametrize a probabilistic forecaster $P_{x_t}$. Given a scoring rule $S$,  the loss at step $t$ is $\ell_t(x_t)=S(P_{x_t},y_t)$ . The expected score, also denoted by $S$ in  \cite{gneiting2007strictly}, is a discrepancy measure between probabilities
\[
S(P_{x_t},P_{t})=L_t(x_t) =\E_{t-1}[S(P_{x_t},y_t)]\,,
\]
where $P_t$ denotes the distribution of $y_t$ given $\mathcal F_{t-1}$. Condition {\bf (H2)} holds on the scoring rule $S$, the parametrization $x\mapsto P_x$ and the distribution $P_t$ of the variable of interest $y_t$ given ${\cal F}_{t-1}$
\[
S(P_{y},P_{t})\le S(P_x,P_t)+\nabla_y S(P_{y},P_{t})^T(y-x)-\frac\alpha2 \E_{t-1}[(\nabla_y S(P_{y},y_{t})^T(y-x))^2]\,,  x,y\in\mathcal K\,.
\]
If Condition {\bf (H2)} is satisfied in  well-specified settings $P_t=P_{x_t^*}$, for any $x_t^*\in {\cal K}$, then $S$ is a proper scoring rule for the class $\{P_x; x\in {\cal K}\}$ in the sense of  \cite{gneiting2007strictly}; $S(P_{y},P_{t})$ is minimum when $P_{y}=P_t$ by convexity. The scoring rule is not necessarily strictly proper since this maximum is not unique when $\nabla_y S(P_{y},P_{t})$ is null in some directions $y$ in the neighborhood of $x_t^*$. 

We provide examples of time series probabilistic forecasting calibrated using the SOCO analysis by verifying Condition {\bf (H2)}. We focus on the logarithmic score assuming that $P_x$, $P_t$ admit densities $p_x$, $p_t$, $x\in {\cal K}$, $t\ge1$. We have
\[
L_t(x_t)=S(P_{x_t},P_{t})=-\E_{t-1}[\log(p_{x_t}(y_t))]=KL(P_t,P_{x_t})-\E_{t-1}[\log(p_t(y_t))]
\]
where $KL$ is the Kullback-Leibler divergence. This scoring rule is strictly proper because $S$ is minimized when $P_{y}=P_t$ only.
It is likely to satisfy the stochastic exp-concavity condition {\bf (H2)} locally in well-specified settings.
\begin{proposition}
If $P_t$ is in the exponential family so that its conditional density $p_t(y)$ is proportional to $e^{T(y)^Tx^*_t-\ell_t(x^*_t)} $ with sufficient statistic $T(y)$ and some $x^*_t\in{\cal K}$ then for the logarithmic score
\begin{multline*}
\E_{t-1}[\nabla \ell_t(x^*_t)\nabla \ell_t(x^*_t)^T]=\E_{t-1}[\nabla_{x^*_t} S(P_{x^*_t},y_{t})^T\nabla_{x^*_t} S(P_{x^*_t},y_{t})^T]\\=\nabla_{x^*_t}^2 S(P_{x^*_t},P_{x^*_t})=\nabla^2 L_t(x^*_t)\,,
\end{multline*}
and necessarily $\alpha\le 1$ if condition {\bf (H2)} holds. 
\end{proposition}
\begin{proof}
We apply Proposition \ref{prop:alpha}, noticing that the Fisher information identity holds in the well-specified setting.
\end{proof}
We use the logarithmic score for calibrating the first and second moments of gaussian forecasters as recommended in Section 4.4 of \cite{gneiting2007strictly}.  \cite{giraud2015aggregation} focus on the estimation of $m_t=\E_{t-1}[y_t]$, establishing fast-rate stochastic regret bounds in expectation. 
\begin{example}[Estimation of the conditional expectation]\label{ex:normal}
Let $P_x={\cal N}(x, \sigma^2)$ so that $\ell_t(x)=(x-y_t)^2/(2\sigma^2)$ (plus constant). In the OCO analysis, $\ell_t$ is $ \sigma^2/D^2$-exp-concave only if $y_t\in {\cal K}$ satisfying {\bf (H1)}. This setting requires implicitly that the distributions $P_t$ are $\cal K$ supported. Unbounded cases $y_t\notin\cal K$ are analyzed by SOCO assuming that the conditional distribution $P_t$ has mean $m_t=\E_{t-1}[y_t]\in{\cal K}$ and finite conditional variance $\sigma_t^2=\v_{t-1}(y_t)\le \overline \sigma^2$ a.s., for some $\overline \sigma^2>0$ and all $t\ge1$. The losses $\ell_t$ are not exp-concave but still satisfy Condition {\bf (H2)} with $\alpha= \sigma^2/(\overline \sigma^2+D^2)$; See Proposition \ref{prop:mean} for more details. The well-specified unbounded case $P_t={\cal N}(x,\sigma_t^2)$ satisfied Condition ({\bf H2}) with $\alpha= \sigma^2/(\overline \sigma^2+D^2)$ when $m_t\in {\cal K}$ and $\sigma_t^2\le \overline \sigma^2$.
\end{example}
We also focus on the estimation of the conditional variance or volatility $\sigma_t^2=\v_{t-1}(y_t)$ for gaussian probabilistic forecasters. Up to our knowledge, stochastic regret bounds for sequential algorithms calibrating the volatility have not been established yet. However, the concept of volatility is important and required in many applications such as risk assessment and probabilistic forecasting in finance \citep{mcneil2015quantitative,shafer2019game}.  The logarithmic score is well-suited to measure the performances of volatility estimators as it is robust to extreme values \citep{patton2011volatility}. 
\begin{example}[Estimation of the volatility] Let $P_x={\cal N}(m_t,x)$ then $\ell_t(x)=(\log(x)+(y_t-m_t)^2/x)/2$ (plus constant) is convex only if $0<x\le 2(y_t-m_t)^2$. This assumption is unrealistic  when $y_t$ is concentrated around its conditional mean $m_t$. Using SOCO, if  the conditional distribution $P_t$ has mean $m_t$ and volatility $\sigma_t^2\in \mathcal K=[c\overline{\sigma}^2/2,\overline{\sigma}^2]$, $ \overline \sigma^2>0$, $1<c<2$, then the risk $L_t(x)=(\log(x)+\sigma_t^2/x)/2$ is strongly convex with $\mu=(c-1)/(2\overline{\sigma}^4)$. Condition {\bf (H2)} is satisfied with $\alpha=(c-1)c^42^{-6}$ if $\E_{t-1}[(y_t^2-\sigma_t^2)^2]\le 3\overline{\sigma}^4$; See Proposition \ref{prop:vol} for more details. 
\end{example}
The stochastic exp-concavity condition is well-preserved for linear multivariate parametrization. Thus the conditional expectation and the volatility can be expressed as a linear combination of the past observations $y_{t-1},\ldots,y_1$ or their squares $y_{t-1}^2,\ldots, y_1^2$. We obtain naturally AR and ARCH estimations for the conditional expectation and the volatility in Sections \ref{sec:arma} and  \ref{sec:volfor}, respectively. Combining both, we obtain the AR-ARCH gaussian forecaster studied in Section \ref{sec:pred}. The parametrization does not preserve the strictly proper property of the logarithmic loss function. Despite the logarithmic score being strictly proper overall probability measures, it is not for the AR-ARCH models because different linear combinations of past observations provide the same probabilistic forecaster. Stochastic exp-concavity condition {\bf (H2)}, more general than strict properness, is crucial.


\subsection{Sequential ARMA prediction by BOA-ONS}\label{sec:arma}

AutoRegressive Moving Average (ARMA) modeling of the conditional mean is standard in time series analysis. See \cite{brockwell2009time} for a reference textbook. We calibrate sequentially, using the SOCO analysis with the natural filtration ${\cal F}_t=\sigma(y_t,\ldots, y_1)$, and the gaussian forecaster $\mathcal N(\widehat m_t^{(p)}(x),\sigma^2)$ for arbitrary $\sigma^2>0$, with clipped mean
\[
\widehat m_t^{(p)}(x)= x^T((y_{t-1}\wedge M/2)\vee (-M/ 2),\ldots,(y_{t-p}\wedge M/ 2)\vee (-M/ 2)),\,M>0\,,
\]
and ${\cal K}=B_1(1)$, the $\ell^1$ unit-ball of dimension $p$. 
\begin{proposition}\label{prop:mean}
We assume that the distributions $P_t$ of $y_t$ given $y_{t-1},\ldots,y_1$, admit densities  with means $m_t$ satisfying $2|m_t|\le M$, volatilities $\sigma_t^2\le \overline \sigma^2$ a.s.,  $M>0$, $\overline \sigma^2>0$, for every $t\ge1$, and satisfy {\bf(H3)}. Then the gaussian forecaster $\mathcal N( m_t^{(p)}(x),\sigma^2)$ calibrated by the ONS algorithm with $\gamma=\sigma^2/(3(\overline \sigma^2+M^2))$ achieves the stochastic regret
\begin{multline*}
\sum_{t=1}^TKL_t(P_t,\mathcal N(\widehat m_t^{(p)}(x_t),\sigma^2))- \sum_{t=1}^TKL(P_t,\mathcal N( m_t^{(p)}(x),\sigma^2))\\\le O\Big(\frac{\overline\sigma^2+M^2}{\sigma^2}p \log T + \Big(\frac{\overline\sigma^2+M^2}{\sigma^2}+\frac{\sigma^2}{\overline\sigma^2+M^2} p\, G_{\psi_2}^2\Big)\log(\delta^{-1}))\Big)\,,
\end{multline*}
for every $ T\ge 1$, $x\in B_1(1)$, and with high probability $1-\delta$.
\end{proposition}
To tackle the case of ARMA models with a moving average component, we consider increasing orders $p$ since any invertible ARMA model admits an AR($\infty$) representation. For the orders $p\in\{1,\ldots,\sqrt{\log T}/\log\log T\}$, the ONS predictors $\widehat m_t^{(p)}(x_t)$ are aggregated with BOA in $\widehat m_t$. The obtained BOA-ONS algorithm achieves the cumulative $KL$-divergence bound  
\begin{multline}
\sum_{t=1}^TKL(P_t,\mathcal N(\widehat m_t,\sigma^2))\le \min_{1\le p\le \sqrt{\log T}/\log\log T}\min_{x\in B_1(1)}\Big\{\sum_{t=1}^TKL(P_t,\mathcal N(m_t^{(p)}(x),\sigma^2))\\
+ O\Big(\frac{\overline \sigma^2+M^2}{\sigma^2}p\log T\Big)+O\Big(\frac{\overline \sigma^2+M^2}{\sigma^2}+\frac{\sigma^2}{\overline \sigma^2+M^2}p\, G_{\psi_2}^2\Big)\log(\delta^{-1})\Big)\Big\}\,,\label{eq:regretklarma}
\end{multline}
refining the bound  obtained by \cite{anava2013online}. Our bound is valid in every sub-gaussian stochastic adversarial settings where $2|m_t|\le D$, and the time series $(y_t)$ does not have to be bounded as in \cite{anava2013online}. Moreover, our bounds are any-time valid with high probability.

The parameters $(M,\sigma^2)$ should be tuned to find the best compromise in the regret bound \eqref{eq:regretklarma}. However, the task is not feasible using the SOA analysis because the loss functions depend on these parameters. The solution comes from the econometrics litterature that provides better loss and risk functions introducing the concept of volatility.

\subsection{Sequential ARCH prediction by BOA-ONS}\label{sec:volfor}

In mathematical finance, the log-ratios $(y_t)$ are commonly modeled  using Generalized AutoRegressive Conditionally Heteroscedastic (GARCH) model. Classical inference uses the Quasi-Likelihood approach \citep{francq2019garch} as if the conditional distributions were gaussian. If the conditional means $m_t:=\E[y_t\mid y_{t-1},\ldots,y_1]$ are null, the volatilities $\sigma^2_t:=\v(y_t\mid y_{t-1},\ldots,y_1)$ are finite, $t\ge 1$, then the Quasi-Likelihood estimator $\widehat\sigma^2_t$ of the volatility minimizes the cumulative KL divergence $KL(P_t,\mathcal N(0,\widehat\sigma_t^2))=(\log(2\pi\widehat\sigma_t^2)+\sigma^2_t/\widehat\sigma_t^2)/2$ (plus constant).

We assume that  $\sigma_t^2\in[c\overline\sigma^2/3,\overline \sigma^2]$, $1<c<2$, $\overline\sigma^2>0$, and we use a clipped-ARCH(q) model  
\begin{equation}\label{eq:arch}
\widehat\sigma_t^{2,(q)}(x)=c\overline\sigma^2/2+x_1 (y_{t-1}^2\wedge \overline \sigma^2)+\cdots + x_q(y_{t-q}^2\wedge \overline \sigma^2)\,,
\end{equation}
with $x\in {\cal K}=\{x\in \R^{q}:\, x\ge {\bf 0}\text{ and } \|x\|_1\le 1-c/2 \}$.

\begin{proposition}\label{prop:vol} We assume that the distributions $P_t$ of $y_t$ given $y_{t-1},\ldots,y_1$, admit densities with means $m_t=0$, volatilities $\sigma^2_t\in [c\overline\sigma^2/2, \overline\sigma^2]$,  $\E_{t-1}[(y_t^2-\sigma_t)^2]\le 3\overline\sigma^4$, a.s., $1<c<2$, $\overline \sigma^2>0$ for every $t\ge1$, and satisfy {\bf(H3)}. Then the gaussian forecaster $\mathcal N(0,\widehat\sigma_t^{2,(q)}(x))$ calibrated by the ONS algorithm with $\gamma=2^6/(3(c-1)c^4)$ achieves the stochastic regret
\[
\sum_{t=1}^TKL(P_t,\mathcal N(0,\widehat\sigma_t^{2,(q)}(x_t)))- \sum_{t=1}^TKL(P_t,\mathcal N(0,\widehat\sigma_t^{2,(q)}(x)))\le O(q (\log T +G_{\psi_2}^2 \log(\delta^{-1})))\,,
\]
for every $ T\ge 1$, $x\in {\cal K}$ with high probability $1-\delta$.
\end{proposition}
Any invertible GARCH model admits an ARCH($\infty$) representation. Thus we consider ARCH($q$) models with increasing order $q$. We consider BOA-ONS $\widehat \sigma_t^2$ aggregating $\widehat \sigma_t^{2,(q)}(x_t)$, $q=1,\ldots,\sqrt{\log T}/\log\log T$ so that with high probability \begin{multline*}
\sum_{t=1}^TKL(P_t,\mathcal N(0,\widehat\sigma_t^{2}))\le  \min_{1\le q\le \sqrt{\log T}/\log\log T}\min_{x\in \mathcal K}\Big\{\sum_{t=1}^TKL(P_t,\mathcal N(0,\sigma^{2,(q)}(x)))
\\
++ O(q(\log T+G_{\psi_2}^2  \log(\delta^{-1})))\Big\}\,.
\end{multline*}
We solve positively the question raised in the conclusion of
\cite{anava2013online} about the optimization of GARCH forecasters. The main restriction of our approach is the small range of the volatilities $[c\overline\sigma^2/2,\overline \sigma^2]$, $1<c<2$. Otherwise, the risk functions arer not even convex when the volatility $\sigma_t^2$ can be over-estimated by a factor of 2. It is not surprising since \cite{francq2010inconsistency} showed that the Quasi-Likelihood approach is inconsistent with no  lower boundedness assumption on the volatilities. 

\subsection{Online gaussian probabilistic forecasting using BOA-ONS}\label{sec:pred}

We combine the ARMA and volatility prediction methods. We consider gaussian probabilistic forecaster $\mathcal N(\widehat m_t^{(p)}(x_{1:p}),\widehat\sigma_t^{2,(q)}(x_{p+1:p+q}))$ with $M^2=\overline \sigma^2$ and 
\[
x=(x_{1:p},x_{p+1:p+q})\in {\cal K}=\{x\in \R^{p+q}:\, \|x_{1:p}\|_1\le 1,\;x_{p+1:p+q}\ge {\bf 0}, \|x_{p+1:p+q}\|_1\le 1-c/2\}.
\] 
\begin{proposition} We assume that the distributions $P_t$ of $y_t$ given $y_{t-1},\ldots,y_1$, admit densities with means $2|m_t|\le \overline \sigma$, volatilities $\sigma_t^2\in [c\overline \sigma^2/2, \overline \sigma^2]$,  $\E_{t-1}[(y_t-m_t)^4]\le 3\overline\sigma^4$, a.s., $1<c<2$, $\overline \sigma^2>0$ for every $t\ge1$, and satisfy {\bf(H3)}. Then the gaussian forecaster $\mathcal N(\widehat m_t^{(p)}(x),\widehat\sigma_t^{2,(q)}(x))$ calibrated by the ONS algorithm with $\gamma=3\times2^5/((c-1)c^4)$ achieves the stochastic regret
\begin{multline*}
\sum_{t=1}^TKL(P_t,\mathcal N(\widehat m_t^{(p)}(x_{1:p}),\widehat\sigma_t^{2,(q)}(x_{p+1:p+q})))- \sum_{t=1}^TKL(P_t,\mathcal N(\widehat m_t^{(p)}(x_{1:p}),\widehat\sigma_t^{2,(q)}(x_{p+1:p+q})))\\
 \le O((p+q)( \log T +G_{\psi_2}^2) \log(\delta^{-1})))\,,
\end{multline*}
for every $ T\ge 1$, $x\in {\cal K}$ with high probability $1-\delta$.
\end{proposition}

Aggregating such predictors for $1\le p,q \le \sqrt{\log T}/\log\log T$ with BOA, we obtain a gaussian probabilistic forecast $\mathcal N(\widehat m_t,\widehat\sigma_t^2)$ satisfying the cumulative $KL$-divergence bound
\begin{align*}
\sum_{t=1}^TKL(P_t,\mathcal N(\widehat m_t,\widehat\sigma_t^{2}))\le & \min_{1\le q,p\le \sqrt{\log T}/\log\log T}\min_{x\in \mathcal K}\Big\{\sum_{t=1}^TKL(P_t,\mathcal N(\widehat m_t^{(p)}(x_{1:p}),\widehat\sigma_t^{2,(q)}(x_{p+1:p+q})))\\
&+ O((p+q)(\log T+ G_{\psi_2}^2\log(\delta^{-1})))\Big\}
\end{align*}
with high probability.  The sequential algorithms adapt to the random environment even in misspecified settings; It approximates the parametric gaussian forecaster that is the closest to the unknown conditional distributions for the cumulative KL divergences and a penalty which increases such as $(p+q)\log(T)$. Thus the BOA-ONS forecaster regret minimizes automatically a Bayesian information type criterion at any-time and with high probability. It is comparable to a model selection procedure that would require to minimize a penalized log-likelihood at each step $1\le t\le T$. The computational cost of our recursive method is $O(T((p+q)^2+P))$ with explicit formulae except for the projection step of computational cost $P$, whereas the batch model selection has a computational cost $O(T(p+q)M)$ where $M$ is the computational cost of the optimization of the likelihood in AR($p$)-ARCH($q$) models. This cost $M$ is prohibitive when $p+q$ is large and the computational gain of our recursive procedure is important.  
%
%
%
\subsection{Sequential probabilistic forecasting using BOA-ONS}

The main drawback of our BOA-ONS approach on gaussian forecasters is the restriction $\sigma_t^2\in[c\overline\sigma^2/2,\overline \sigma^2]$, $1\le t\le T$. However, because the loss and risk functions depend on this hyperparameter, it is not possible to directly aggregate volatility estimators with different $\overline\sigma^2>0$ in a gaussian forecaster to extend the range of the volatilities.\\

To circumvent the issue, we can aggregate the gaussian probabilistic forecasters to obtain a probabilistic forecaster which is mixed gaussian. Consider $\widehat P_t=(\widehat P^{(i)}_t)_{1\le i\le K}$, $K$ weak probabilistic forecasters with densities $\widehat p_t=(\widehat p_t^{(i)})_{1\le i\le K}$ such as $\widehat P^{(i)}_t=\mathcal N(\widehat m_t^{(j)},\widehat\sigma_t^{2,\ell})$ with different localization, $jD+D/\sqrt{2} \le \widehat m_t^{(j)}\le (j+1)D+D/\sqrt{2}$, for $- K_1\le j\le K_2$ and $\widehat\sigma_t^{2,\ell}\in[(c/2)^{\ell+1}\overline\sigma^2/2,(c/2)^\ell \overline\sigma^2]$ for $0\le \ell\le K_3$. Consider the SOCO analysis of mixtures $x^T\widehat P$ with $\mathcal K= \Lambda_K$ and $\ell_t(x)=-\log(x^T\widehat p_t(y_t))$. We assume that $m\le \E_{t-1}[1/\widehat p_t^{(i)}(y_t)^2]\le M$ a.s. for $1\le i\le K$, $t\ge 1$.  The risk function is $m$-strongly convex and Condition {\bf (H2)} is satisfied with $\alpha=m/M$. 

Under Condition {\bf (H3)}, we can use the ONS algorithm on the simplex ${\cal K}=\Lambda_K$, and we obtain 
\begin{align*}
\sum_{t=1}^TKL(P_t,\pi_t^T\widehat p )\le \min_{\pi\in\Lambda_K} \sum_{t=1}^TKL(P_t,\pi^T\widehat p)+O(MK/m( \log T+\log(\delta^{-1})))\,.
\end{align*}
Similar fast-rate regret bounds were obtained by \cite{thorey2017online} for the CRPS score instead of the KL divergence. They used the Recursive Least Square algorithm without projection that does not constrain $\pi_t$ to be in $\Lambda_K$. Contrary to our procedure, it is difficult to interpret their ensemble probabilistic forecast because they do not satisfy the axioms of a density function. 

\section{Numerical illustrations}\label{sec:illustr}

\subsection{Aggregations in stochastic environments}

\begin{figure}[ht]
\centerline{
\includegraphics[scale=.25]{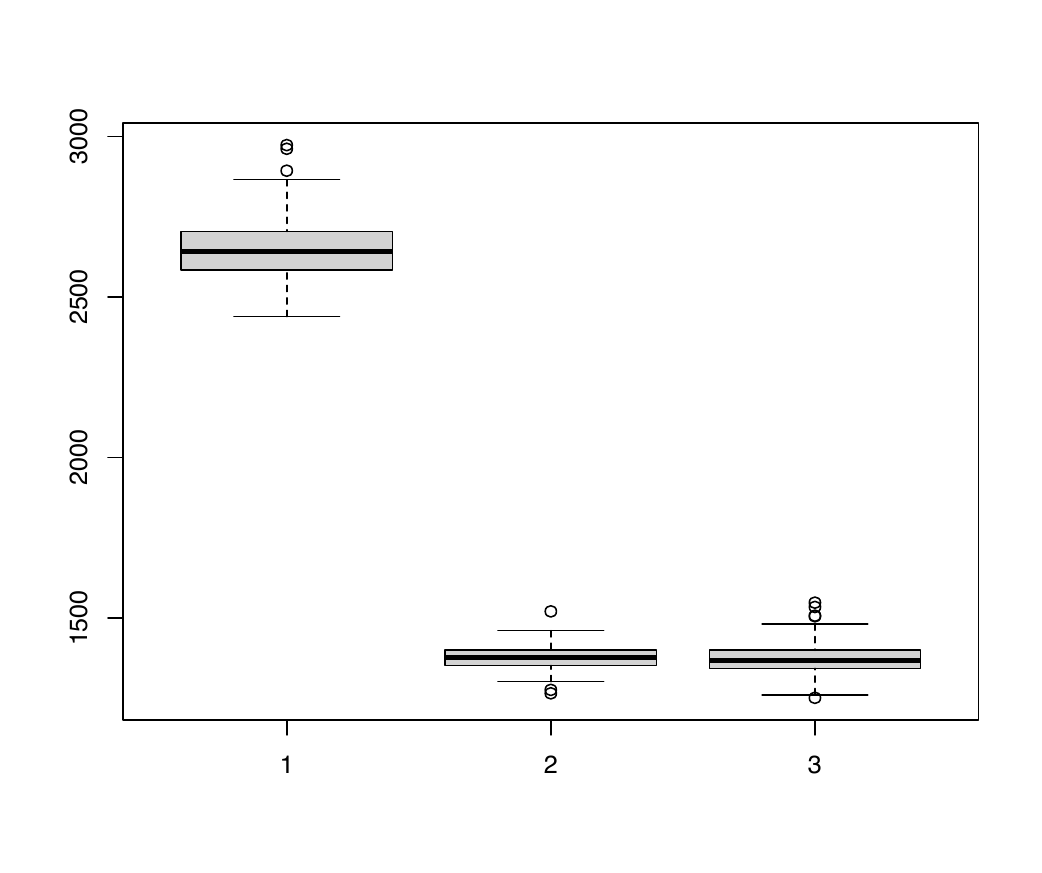}\quad\includegraphics[scale=.25]{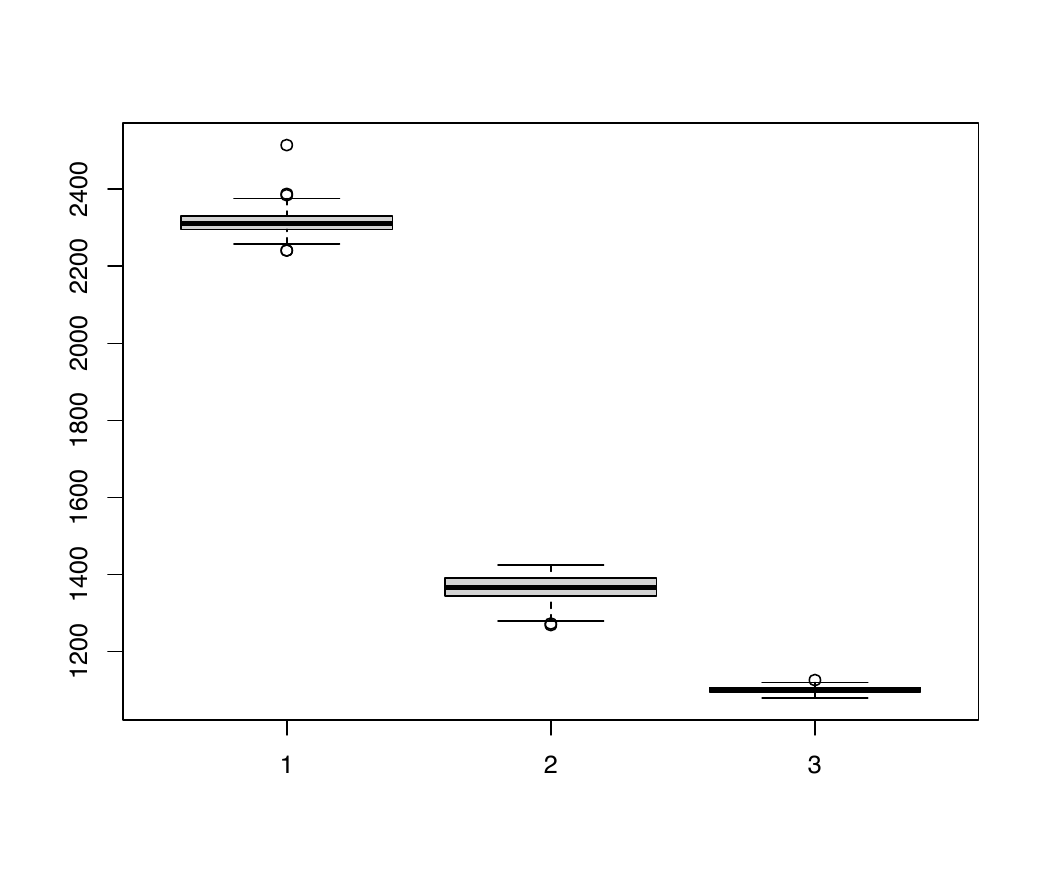}\quad\includegraphics[scale=.25]{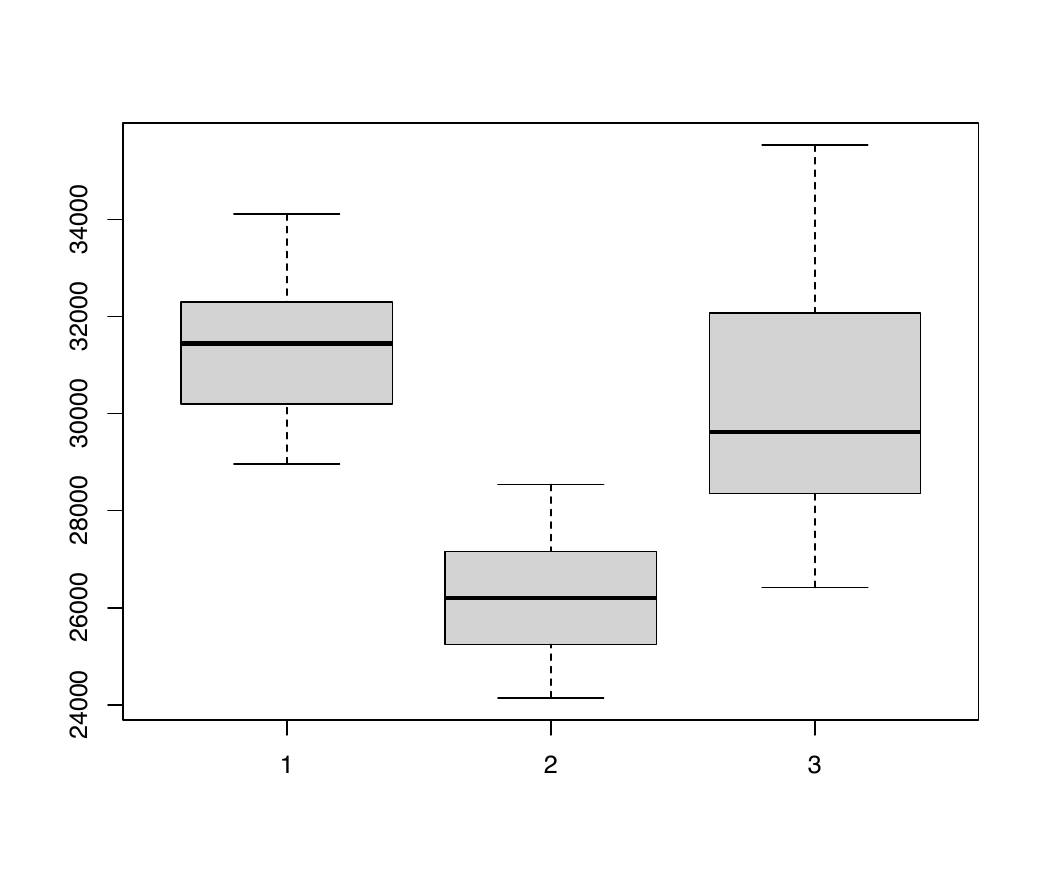}}
\caption{Boxplots of original BOA (1), scale-free version of BOA (2), and squint (3) for $\sigma=1$ (left), $\sigma=.1$ (middle) and $\sigma =10$ (right).}\label{fig:bp}
\end{figure}

We study the impact of stochastic deviations on the aggregation of predictors for quadratic losses. We consider $100$ predictors of $y_t=0$, $t\ge 1$, the first one being negatively biased $-\sqrt t + \sigma N_t^{(1)}$, the other ones being positively biased $\sqrt t + \sigma N_t^{(i)}$, $1\le t \le 1000$, $2\le i\le 100$. Here $N_t^{(i)}$ are iid standard gaussian random variables. Any aggregation half-weighting the first predictor does not suffer from the bias. We run 100 Monte-Carlo experiments of three different aggregation algorithms; The original version of BOA of \cite{wintenberger2017optimal}\footnote{The multiple tuning of the deviation bounds in the original version of BOA is flawed and replaced by the univariate doubling trick of \cite{cesa2007improved}.}, the scale-free version of BOA of Algorithm \ref{alg:boa}, and the squint algorithm of \cite{koolen2016combining}. The latter algorithm is not comparable as it uses beforehand the maximum of the deviations for initializing the algorithm. As expected, its performances compared with the scale-free version of BOA highly depend on the level of the stochastic deviations, outperforming it when $\sigma=.1$; See Figure \ref{fig:bp}. The original version of BOA does not manage to learn efficiently the range of small deviations and does not outperform the scale-free version of BOA in this case because it also suffer from a small observed minimal loss $\underline{m}$. On the opposite, for large deviations, both BOA versions achieve performances competitive with squint, the scale-freee BOA outperforming the two other algorithms when $\sigma=10$.

\subsection{Quantile prediction of electricity loads}\label{sec:predint}

\begin{figure}[ht]
\centerline{
\includegraphics[scale=.25]{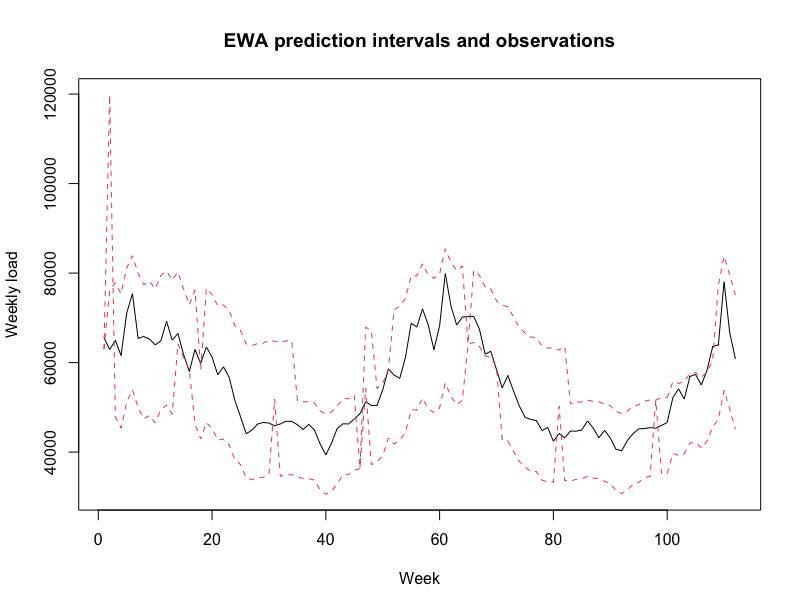}\qquad\includegraphics[scale=.25]{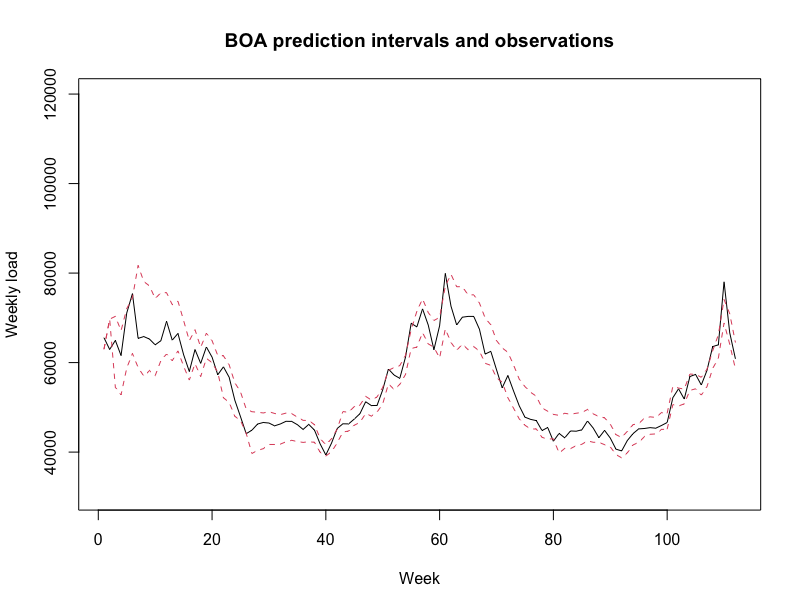}}
\caption{$90\%$-prediction intervals of the electricity load based on EWA (left) and BOA (right) and the same 5 forecasters.}
\end{figure}
We illustrate the impact of the SOCO anlysis on quantile predictions for weekly electricity load, data available in the Opera package developed by  \cite{opera}. The 3 forecasters (GAM, AR, GBM) provided in Opera package plus 2 constant forecasters, 0 and 1.5 times the maximum of weekly loads, are aggregated to predict the upper and lower quantile of levels .5 and .95. We use the quantile  loss funcion in 2 different sequential aggregation algorithms, Exponentially Weighted Algorithm (EWA) and BOA, and for the two levels .5 and .95. BOA aggregations provide accurate quantile predictions because it minimizes cumulative risks in the SOA analysis. It confirms the theoretical guarantees obtained in the paper since it is likely that the pinball risk is strongly convex   \citep{steinwart2011estimating}. On the contrary EWA aggregations  fail to provide accurate quantile predictions because EWA algorithm minimizes the cumulative losses which are not exp-concave. Such visual validation of the predictions interval is enough to show the benefit of BOA but does not constitute any evidence of its good calibration.   \cite{biau2011sequential} analyze the asymptotic guarantees of a different sequential algorithm predicting quantiles.

\subsection{Volatility estimation during the COVID crisis}
We apply BOA-ONS for designing $90\%$-prediction intervals for the S\&P500 index during 2020, including the COVID crisis in March. We use the iid $\mathcal N(0,x)$ and ARCH($p$) gaussian probabilistic forecasters for $p=1,\ldots,5$. The forecasters are tuned sequentially with the ONS algorithm with $\gamma=1$, and  ${\cal K}=[c,\infty]\times B_1(1)$, $c=0$ in the iid case, and $c=10^{-16}$ in the ARCH cases. These 6 predictors of the volatility are then aggregated with BOA; See Figure \ref{fig:sp500}. We notice that the iid forecast prediction interval is constant after some training period. The ARCH forecasts are required to predict intervals accurately during the crisis.  BOA aggregations converge to weights $(0.01,0.17, 0.09, 0.37, 0.21, 0.15)$ and improve the calibration of ARCH forecasters. A slightly more advanced sequentially calibrated volatility estimator  developed by \cite{werge2022adavol} has been used in the forecast task of the M6 financial competition by \cite{de2023adaptive}. Its RPS performances rank 5th out of 163 competitors, showing that such sequential calibration o  is competitive in probabilistic forecasting.
\begin{figure}[ht]\label{fig:sp500}
\centerline{
\includegraphics[height=6cm,width=7cm]{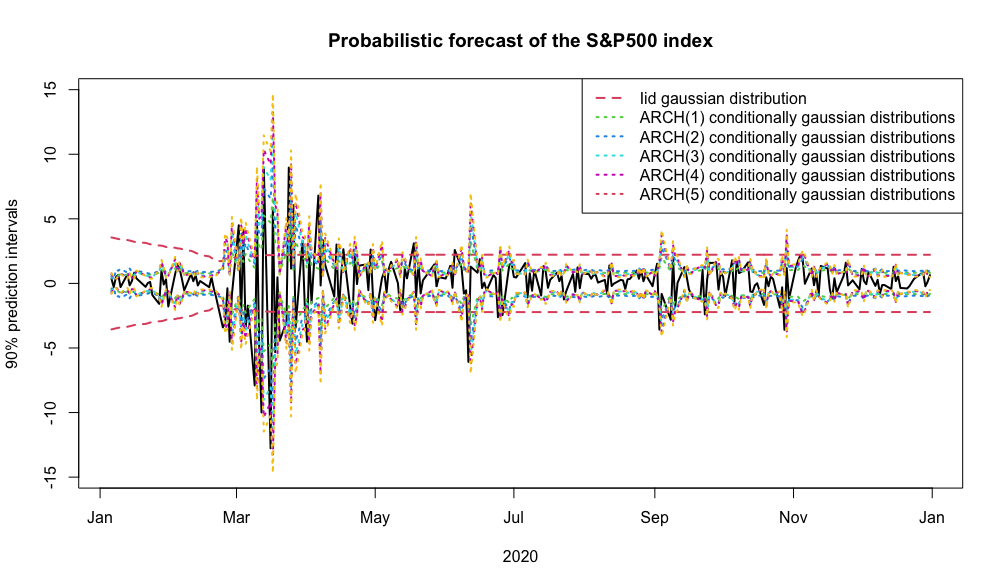}\qquad\includegraphics[height=6cm,width=7cm]{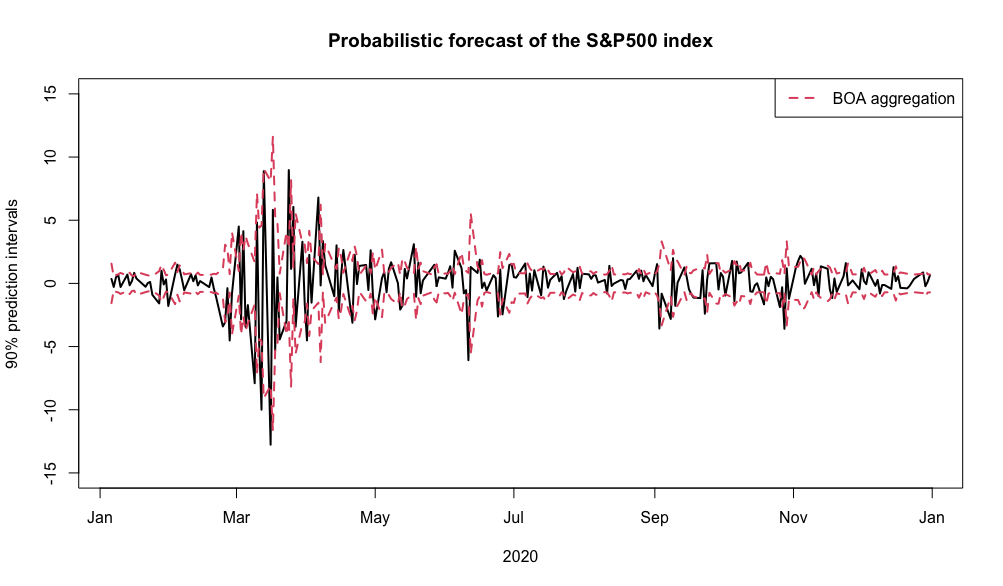}}
\caption{$90\%$-prediction intervals from 6 forecasters (left) and their BOA aggregation (right).}
\end{figure}

\section{Conclusion and future works}\label{sec:concl}

In this paper, we derive fast-rate stochastic regret bounds for the ONS and BOA algorithms under stochastic exp-concavity. We alleviate the convexity assumption on the loss functions to calibrate sequentially parametric probabilistic forecasting using the logarithmic score. We achieve fast-rate stochastic regret bounds. Thus, BOA-ONS can adaptively and efficiently calibrate gaussian probabilistic forecasters for any conditionally sub-gaussian non-stationary time series. Our stochastic regret bounds are relative to a static prediction parametrized by  $x\in {\cal K}$ for every $t\ge1$. When forecasting non-stationary time series, we should also consider competitors that evolve through time. Key Propositions \ref{prop:surrogate} and \ref{prop:pois} extend readily to such settings called tracking optimization problems. Thus, one would like to develop SOCO and algorithms in more dynamic settings. A first step in that direction is made in \cite{haddouche2023optimistic} using optimistic sequential  algorithms.
\vspace*{-10pt}

\bibliography{biblioSOCO}

\begin{thebibliography}{40}
\providecommand{\natexlab}[1]{#1}
\providecommand{\url}[1]{\texttt{#1}}
\expandafter\ifx\csname urlstyle\endcsname\relax
  \providecommand{\doi}[1]{doi: #1}\else
  \providecommand{\doi}{doi: \begingroup \urlstyle{rm}\Url}\fi

\bibitem[Adjakossa et~al.(2020)Adjakossa, Goude, and
  Wintenberger]{adjakossa2020kalman}
E.~Adjakossa, Y.~Goude, and O.~Wintenberger.
\newblock Kalman recursions aggregated online.
\newblock \emph{arXiv preprint arXiv:2002.12173}, 2020.

\bibitem[Agarwal and Duchi(2012)]{agarwal2012generalization}
A.~Agarwal and J.~C. Duchi.
\newblock The generalization ability of online algorithms for dependent data.
\newblock \emph{IEEE Transactions on Information Theory}, 59\penalty0
  (1):\penalty0 573--587, 2012.

\bibitem[Anava et~al.(2013)Anava, Hazan, Mannor, and Shamir]{anava2013online}
O.~Anava, E.~Hazan, S.~Mannor, and O.~Shamir.
\newblock Online learning for time series prediction.
\newblock In \emph{Conference on learning theory}, pages 172--184. PMLR, 2013.

\bibitem[Audibert(2007)]{audibert2007progressive}
J.-Y. Audibert.
\newblock Progressive mixture rules are deviation suboptimal.
\newblock \emph{Advances in Neural Information Processing Systems}, 20, 2007.

\bibitem[Bercu and Touati(2008)]{bercu2008exponential}
B.~Bercu and A.~Touati.
\newblock Exponential inequalities for self-normalized martingales with
  applications.
\newblock \emph{Annals of Applied Probability}, 18:\penalty0 1848--1869, 2008.

\bibitem[Bottou et~al.(2018)Bottou, Curtis, and
  Nocedal]{bottou2018optimization}
L.~Bottou, F.~E. Curtis, and J.~Nocedal.
\newblock Optimization methods for large-scale machine learning.
\newblock \emph{SIAM review}, 60\penalty0 (2):\penalty0 223--311, 2018.

\bibitem[Brockwell and Davis(2009)]{brockwell2009time}
P.~J. Brockwell and R.~A. Davis.
\newblock \emph{Time series: theory and methods}.
\newblock Springer science \& business media, 2009.

\bibitem[Cesa-Bianchi and Lugosi(2006)]{cesa2006prediction}
N.~Cesa-Bianchi and G.~Lugosi.
\newblock \emph{Prediction, learning, and games}.
\newblock Cambridge university press, 2006.

\bibitem[Francq and Zako{\"\i}an(2010)]{francq2010inconsistency}
C.~Francq and J.-M. Zako{\"\i}an.
\newblock Inconsistency of the mle and inference based on weighted ls for larch
  models.
\newblock \emph{Journal of Econometrics}, 159\penalty0 (1):\penalty0 151--165,
  2010.

\bibitem[Francq and Zakoian(2019)]{francq2019garch}
C.~Francq and J.-M. Zakoian.
\newblock \emph{GARCH models: structure, statistical inference and financial
  applications}.
\newblock John Wiley \& Sons, 2019.

\bibitem[Freedman(1975)]{freedman1975tail}
D.~A. Freedman.
\newblock On tail probabilities for martingales.
\newblock \emph{the Annals of Probability}, pages 100--118, 1975.

\bibitem[Gaillard and Wintenberger(2018)]{gaillard2018efficient}
P.~Gaillard and O.~Wintenberger.
\newblock Efficient online algorithms for fast-rate regret bounds under
  sparsity.
\newblock \emph{Advances in Neural Information Processing Systems}, 31, 2018.

\bibitem[Gaillard et~al.(2021)Gaillard, Goude, Plagne, Dubois, and
  Thieurmel]{opera}
P.~Gaillard, Y.~Goude, L.~Plagne, T.~Dubois, and B.~Thieurmel.
\newblock \emph{opera: Online Prediction by Expert Aggregation}, 2021.
\newblock URL \url{https://CRAN.R-project.org/package=opera}.
\newblock R package version 1.2.0.

\bibitem[Giraud et~al.(2015)Giraud, Roueff, and
  Sanchez-Perez]{giraud2015aggregation}
C.~Giraud, F.~Roueff, and A.~Sanchez-Perez.
\newblock Aggregation of predictors for non stationary sub-linear processes and
  online adaptive forecasting of time varying autoregressive processes.
\newblock \emph{Annals of Statistics}, 43\penalty0 (6):\penalty0 2412--2450,
  2015.

\bibitem[Gneiting and Raftery(2007)]{gneiting2007strictly}
T.~Gneiting and A.~E. Raftery.
\newblock Strictly proper scoring rules, prediction, and estimation.
\newblock \emph{Journal of the American statistical Association}, 102\penalty0
  (477):\penalty0 359--378, 2007.

\bibitem[Haddouche et~al.(2023)Haddouche, Guedj, and
  Wintenberger]{haddouche2023optimistic}
M.~Haddouche, B.~Guedj, and O.~Wintenberger.
\newblock Optimistic dynamic regret bounds.
\newblock \emph{arXiv preprint arXiv:2301.07530}, 2023.

\bibitem[Hazan(2016)]{hazan2016introduction}
E.~Hazan.
\newblock Introduction to online convex optimization.
\newblock \emph{Foundations and Trends{\textregistered} in Optimization},
  2\penalty0 (3-4):\penalty0 157--325, 2016.

\bibitem[Hazan and Kale(2011)]{hazan2011beyond}
E.~Hazan and S.~Kale.
\newblock Beyond the regret minimization barrier: an optimal algorithm for
  stochastic strongly-convex optimization.
\newblock In \emph{Proceedings of the 24th Annual Conference on Learning
  Theory}, pages 421--436. JMLR Workshop and Conference Proceedings, 2011.

\bibitem[Henzi and Ziegel(2022)]{henzi2022valid}
A.~Henzi and J.~F. Ziegel.
\newblock Valid sequential inference on probability forecast performance.
\newblock \emph{Biometrika}, 109\penalty0 (3):\penalty0 647--663, 2022.

\bibitem[Koolen et~al.(2016)Koolen, Gr{\"u}nwald, and
  Van~Erven]{koolen2016combining}
W.~M. Koolen, P.~Gr{\"u}nwald, and T.~Van~Erven.
\newblock Combining adversarial guarantees and stochastic fast rates in online
  learning.
\newblock \emph{Advances in Neural Information Processing Systems}, 29, 2016.

\bibitem[Mahdavi et~al.(2015)Mahdavi, Zhang, and Jin]{mahdavi2015lower}
M.~Mahdavi, L.~Zhang, and R.~Jin.
\newblock Lower and upper bounds on the generalization of stochastic
  exponentially concave optimization.
\newblock In \emph{Conference on Learning Theory}, pages 1305--1320. PMLR,
  2015.

\bibitem[McNeil et~al.(2015)McNeil, Frey, and
  Embrechts]{mcneil2015quantitative}
A.~J. McNeil, R.~Frey, and P.~Embrechts.
\newblock \emph{Quantitative risk management: concepts, techniques and
  tools-revised edition}.
\newblock Princeton university press, 2015.

\bibitem[Mehta(2017)]{mehta2017fast}
N.~Mehta.
\newblock Fast rates with high probability in exp-concave statistical learning.
\newblock In \emph{Artificial Intelligence and Statistics}, pages 1085--1093.
  PMLR, 2017.

\bibitem[Mhammedi et~al.(2019)Mhammedi, Koolen, and
  Van~Erven]{mhammedi2019lipschitz}
Z.~Mhammedi, W.~M. Koolen, and T.~Van~Erven.
\newblock Lipschitz adaptivity with multiple learning rates in online learning.
\newblock In \emph{Conference on Learning Theory}, pages 2490--2511. PMLR,
  2019.

\bibitem[Orseau and Hutter(2021)]{orseau2021isotuning}
L.~Orseau and M.~Hutter.
\newblock Isotuning with applications to scale-free online learning.
\newblock \emph{arXiv preprint arXiv:2112.14586}, 2021.

\bibitem[Patton(2011)]{patton2011volatility}
A.~J. Patton.
\newblock Volatility forecast comparison using imperfect volatility proxies.
\newblock \emph{Journal of Econometrics}, 160\penalty0 (1):\penalty0 246--256,
  2011.

\bibitem[Ramdas et~al.(2022)Ramdas, Gr{\"u}nwald, Vovk, and
  Shafer]{ramdas2022game}
A.~Ramdas, P.~Gr{\"u}nwald, V.~Vovk, and G.~Shafer.
\newblock Game-theoretic statistics and safe anytime-valid inference.
\newblock \emph{arXiv preprint arXiv:2210.01948}, 2022.

\bibitem[Rigollet et~al.(2008)Rigollet, Tsybakov, and
  Juditsky]{rigollet2008learning}
P.~Rigollet, A.~Tsybakov, and A.~Juditsky.
\newblock Learning by mirror averaging.
\newblock \emph{Annals of Statistics}, 36\penalty0 (6):\penalty0 2183, 2008.

\bibitem[Shafer and Vovk(2019)]{shafer2019game}
G.~Shafer and V.~Vovk.
\newblock \emph{Game-theoretic foundations for probability and finance}, volume
  455.
\newblock John Wiley \& Sons, 2019.

\bibitem[Shafer et~al.(2021)]{shafer2021testing}
G.~Shafer et~al.
\newblock Testing by betting: A strategy for statistical and scientific
  communication.
\newblock \emph{Journal of the Royal Statistical Society: Series A (Statistics
  in Society)}, 184\penalty0 (2):\penalty0 407--431, 2021.

\bibitem[Steinwart and Christmann(2011)]{steinwart2011estimating}
I.~Steinwart and A.~Christmann.
\newblock Estimating conditional quantiles with the help of the pinball loss.
\newblock \emph{Bernoulli}, 17\penalty0 (1):\penalty0 211--225, 2011.

\bibitem[Thorey et~al.(2017)Thorey, Mallet, and Baudin]{thorey2017online}
J.~Thorey, V.~Mallet, and P.~Baudin.
\newblock Online learning with the continuous ranked probability score for
  ensemble forecasting.
\newblock \emph{Quarterly Journal of the Royal Meteorological Society},
  143\penalty0 (702):\penalty0 521--529, 2017.

\bibitem[van~de Geer and Lederer(2013)]{van2013bernstein}
S.~van~de Geer and J.~Lederer.
\newblock The bernstein--orlicz norm and deviation inequalities.
\newblock \emph{Probability theory and related fields}, 157\penalty0
  (1-2):\penalty0 225--250, 2013.

\bibitem[Van~Erven and Koolen(2016)]{van2016metagrad}
T.~Van~Erven and W.~M. Koolen.
\newblock Metagrad: Multiple learning rates in online learning.
\newblock \emph{Advances in Neural Information Processing Systems}, 29, 2016.

\bibitem[Ville(1939)]{ville1939etude}
J.~Ville.
\newblock \emph{Etude critique de la notion de collectif}.
\newblock Gauthier-Villars, 1939.

\bibitem[V’yugin and Trunov(2019)]{v2019online}
V.~V. V’yugin and V.~G. Trunov.
\newblock Online learning with continuous ranked probability score.
\newblock In \emph{Conformal and Probabilistic Prediction and Applications},
  pages 163--177. PMLR, 2019.

\bibitem[Waudby-Smith and Ramdas(2020)]{waudby2020estimating}
I.~Waudby-Smith and A.~Ramdas.
\newblock Estimating means of bounded random variables by betting.
\newblock \emph{arXiv preprint arXiv:2010.09686}, 2020.

\bibitem[Werge and Wintenberger(2022)]{werge2022adavol}
N.~Werge and O.~Wintenberger.
\newblock Adavol: An adaptive recursive volatility prediction method.
\newblock \emph{Econometrics and Statistics}, 23:\penalty0 19--35, 2022.

\bibitem[Wintenberger(2017)]{wintenberger2017optimal}
O.~Wintenberger.
\newblock Optimal learning with bernstein online aggregation.
\newblock \emph{Machine Learning}, 106:\penalty0 119--141, 2017.

\bibitem[Zinkevich(2003)]{zinkevich2003online}
M.~Zinkevich.
\newblock Online convex programming and generalized infinitesimal gradient
  ascent.
\newblock In \emph{Proceedings of the 20th international conference on machine
  learning (icml-03)}, pages 928--936, 2003.

\end{thebibliography}
\appendix

\section{Proofs of the main results}\label{sec:proofs}
\subsection{Proof of Proposition \ref{prop:gauss}}
We first show that 
\[
\Big\|\dfrac{\nabla \ell_t(x_t)^T(x_t-x)}{\sqrt{\E_{t-1}[(\nabla \ell_t(x_t)^T(x_t-x))^2]}}\Big\|_{\psi_2}\le \sqrt{8/3 + (1/\log 2)^2}=K_{\psi_2}\approx 2.179\,.
\]
Then we derive 
\[
\dfrac{\E_{t-1}[(\nabla \ell_t(x_t)^T(x_t-x))^{2k}]}{\E_{t-1}[K_{\psi_2}^2(\nabla \ell_t(x_t)^T(x_t-x))^2 ]^k}\le k!\E_{t-1}\Big[\psi_2\Big(\dfrac{\nabla \ell_t(x_t)^T(x_t-x)}{\sqrt{\E_{t-1}[K_{\psi_2}^2(\nabla \ell_t(x_t)^T(x_t-x))^2]}}\Big)\Big]\le 2k!\,.
\]
Using Cauchy-Svhwarz inequality we derive that $\E_{t-1}[(\nabla \ell_t(x_t)^T(x_t-x))^2 ]\le \E_{t-1}[\|\nabla \ell_t(x_t)\|^2 ]D^2\le G^2_2D^2$ and
\begin{align*}
\E_{t-1}[(\nabla \ell_t(x_t)^T(x_t-x))^{2k}]&\le k! 2K_{\psi_2}^{2k}\E_{t-1}[(\nabla \ell_t(x_t)^T(x_t-x))^2 ]^k\\
&\le k! 2K_{\psi_2}^{2k}(G_2D)^{2(k-1)}\E_{t-1}[(\nabla \ell_t(x_t)^T(x_t-x))^2 ]\,.
\end{align*}
Then we fix $G_{\psi_2}=2K_{\psi_2}^2G_2$ so that Condition {\bf (H3)} follows.
Let us denote $\mu_t$ and $\sigma_t$ the mean and the variance of the conditionally gaussian random variable. Then, $N$ being standard gaussian distributed, we use the homogeneity and triangular inequality on the norm $\|\cdot\|_{\psi_2}$ to derive
\begin{align*}
\Big\|\dfrac{\nabla \ell_t(x_t)^T(x_t-x)}{\sqrt{\E_{t-1}[(\nabla \ell_t(x_t)^T(x_t-x))^2]}}\Big\|_{\psi_2}&=\Big\|\dfrac{\sigma_t N + \mu_t}{\sqrt{\E_{t-1}[(\sigma_t N + \mu_t)^2]}}\Big\|_{\psi_2}\\
&\le\dfrac{\sigma_t \|N \|_{\psi_2}+ \|\mu_t\|_{\psi_2}}{\sqrt{\sigma_t^2 + \mu_t^2}}\\
&=\dfrac{\sigma_t\sqrt{8/3}+ \mu_t/\log 2}{\sqrt{\sigma_t^2 + \mu_t^2}}
\end{align*}
and the desired results follows from Cauchy-Schwartz inequality.

\subsection{Proof of Proposition \ref{prop:surrogate}}
Denoting $Y_t=\nabla \ell_t(x_t)^T(x_t-x)$, we observe that under {\bf (H2)} it holds
\begin{equation}\label{eq:h2bis}
\sum_{t=1}^TL_t(x_t)-\sum_{t=1}^TL_t(x)\le \sum_{t=1}^T\E_{t-1}[Y_t]-\frac\alpha2\E_{t-1}[Y_{t}^2].
\end{equation}
Moreover, from Lemma B.1 of \cite{bercu2008exponential} for any random variable $Y_t$ and any $\eta\in \R$ we have 
\[
\E_{t-1}\Big[\exp(\eta (Y_t - \E_{t-1}[Y_t])-\frac{\eta^2} 2(\E_{t-1}[Y_{t}^2]-\E_{t-1}[Y_{t}]^2+(Y_{t}-\E_{t-1}[Y_{t}])^2))\Big]\le 1\,.
\]
Developing the square, we obtain
\[
\E_{t-1}\Big[\exp(\eta (Y_t - \E_{t-1}[Y_t])-\frac{\eta^2} 2(\E_{t-1}[Y_{t}^2]+Y_{t}^2)+\eta^2\E_{t-1}[Y_t]Y_t\Big]\le 1\,.
\]
Using Young's inequality together with Jensen's one, we derive 
\[
\E_{t-1}[Y_t]Y_t\ge -(\E_{t-1}[Y_t]^2+Y_t^2)/2\ge -(\E_{t-1}[Y_t^2]+Y_t^2)/2
\]
and the exponential inequality
\[
\E_{t-1}[\exp(\eta (Y_t - \E_{t-1}[Y_t])-\eta^2(\E_{t-1}[Y_{t}^2]+Y_{t}^2)]\le 1\,.
\]
We obtain the desired result applying a classical martingale argument due to \cite{ville1939etude} and \cite{freedman1975tail} and recalled in Appendix \ref{app:freedman}. Indeed, using the notation of Appendix \ref{app:freedman} with $Z_t=\eta (Y_t - \E_{t-1}[Y_t])-\eta^2(\E_{t-1}[Y_{t}^2]+Y_{t}^2)$, we have 
\[
\P(\exists T\ge 1:\; M_T >\delta^{-1})\le \delta\,,\qquad 0<\delta<1\,,
\]
where $M_T=\exp(\sum_{t=1}^TZ_t)$. Considering $\eta=-\lambda/2$ for any $\lambda>0$, it holds with probability $1-\delta$ for any $T\ge 1$
\begin{multline*}
\sum_{t=1}^T\Big(-\dfrac{\lambda}2(Y_t-\E_{t-1}[Y_t])-\dfrac{\lambda^2}4(\E_{t-1}[Y_t^2]+Y_t^2)\Big) \le \log(\delta^{-1})\\
\Leftrightarrow\sum_{t=1}^T\E_{t-1}[Y_t]\le\sum_{t=1}^TY_t +\dfrac{\lambda}2(\E_{t-1}[Y_t^2]+Y_t^2)+\dfrac2\lambda \log(\delta^{-1})
\end{multline*}
which, combines with \eqref{eq:h2bis}, yields the desired result.

\subsection{Proof of Theorem \ref{th:ons}}
From the proof of the ONS regret bound in \cite{hazan2016introduction}, we obtain from the expression of the recursive steps (and not using the convexity of the loss)
\[
 \sum_{t=1}^T\nabla \ell_t(x_t)^T(x_t-x)\le \frac\gamma2\sum_{t=1}^T(\nabla \ell_t(x_t)^T(x_t-x))^2+\frac1{2\gamma}\log(\det(A_T)/\det(A_0))+\frac1{2\gamma}.
\]
Plugging this inequality into the previous bound we obtain
\begin{align*}
\sum_{t=1}^TL_t(x_t)-\sum_{t=1}^TL_t(x)\le& \frac{\lambda+\gamma}2\sum_{t=1}^T(\nabla \ell_t(x_t)^T(x_t-x))^2\\
&+\frac{\lambda-\alpha}2\sum_{t=1}^T\E_{t-1}[(\nabla \ell_t(x_t)^T(x_t-x))^2]\\
&+\frac1{2\gamma}\log(\det(A_T)/\det(A_0))+\frac1{2\gamma}+\frac2\lambda\log(\delta^{-1})\,.
\end{align*}
Then we apply the Poissonian exponential inequality from Proposition \ref{prop:pois} on the second-order terms. More precisely, denoting $0\le Y_t=(\nabla \ell_t(x_t)^T(x_t-x))^2/(2(G_{\psi_2}D)^2)$, we obtain
\begin{equation}\label{eq:supmart}
\E_{t-1}[\exp(Y_t-2\E_{t-1}[Y_{t}])]\le 1\,.
\end{equation}
Combined with the argument due to \cite{freedman1975tail} recalled in Appendix \ref{app:freedman} we derive
\begin{equation}\label{eq:poisson}
\P\Big(\exists T\ge 1: \sum_{t=1}^T Y_t-2\sum_{t=1}^T \E_{t-1}[Y_{t}]>\log(\delta^{-1})\Big)\le \delta\,,\qquad 0<\delta<1.
\end{equation}
Thus an union bound provides
\begin{align*}
\sum_{t=1}^TL_t(x_t)-\sum_{t=1}^TL_t(x)\le
&\frac{3\lambda+2\gamma-\alpha}2\sum_{t=1}^T\E_{t-1}[(\nabla \ell_t(x_t)^T(x_t-x))^2]\\
&+\frac1{2\gamma}\log(\det(A_T)/\det(A_0))+\frac1{2\gamma}+((\lambda+\gamma)(G_{\psi_2}D)^2+\frac2\lambda)\log(\delta^{-1})\,.
\end{align*}
Choosing $3\lambda=\alpha-2\gamma>0$ since $\gamma<\alpha/2$ we conclude
\begin{equation}\label{eq:ons}
\sum_{t=1}^TL_t(x_t)-\sum_{t=1}^TL_t(x)\le\frac1{2\gamma}\log(\det(A_T)/\det(A_0))+\frac1{2\gamma}+\Big(\dfrac{\alpha+\gamma}3(G_{\psi_2}D)^2+\frac {6}{\alpha-2\gamma}\Big)\log(\delta^{-1})\,.
\end{equation}
From the initialization $A_0=\dfrac1{(\gamma D)^2}I_d$, we obtain bound 
\[
\log(\det(A_T)/\det(A_0))\le d\log\Big(1+(\gamma D)^2\sum_{t=1}^T\|\nabla \ell_t(x_t)\|^2\Big)\,.
\]
We apply the Poissonian exponential inequality from Propostion \ref{prop:pois} on the second-order terms $0\le Y_t=\|\nabla \ell_t(x_t)\|^2/(2G_{\psi_2}^2)$ and, combined with the argument due to \cite{freedman1975tail} and Condition {\bf (H3)} ensuring $\E_{t-1}[Y_t]\le G^2_2/(2G_{\psi_2}^2)$, we obtain
\[
\P\Big(\exists T\ge 1: \sum_{t=1}^T Y_t-T G^2_2/G_{\psi_2}^2>\log(\delta^{-1})\Big)\le \delta\,,\qquad 0<\delta<1.
\]
We derive that, with probability $1-\delta$, it holds
\[
\log(\det(A_T)/\det(A_0))\le d\log\big(1+2(\gamma D)^2(TG_2^2+G_{\psi_2}\log(\delta^{-1}))\big)\,,\qquad T\ge 1\,.
\]
The desired result follows from the specific choice of $\gamma$ and a union bound.
\subsection{Proof of Theorem \ref{thm:boa}}
We keep the same notation and convention as in Section \ref{sec:boa}. In particular, inequalities involving vectors are coordinatewise. With no loss of generality we assume that $\eta_{1,i}\neq 0$ for all $1\le i\le K$. 
To prove the regret bound \eqref{eq:boa} we will show that 
\begin{equation}\label{eq:pi0}
\pi_1^T\exp(-\eta_{T}\widetilde L_{T})\le \underbrace{\exp\Big( \sum_{i=1}^K1\{\max_{2\le t\le T}x_{t,i}>1/4\big\} \log(1+(\eta_{1,i}M_{T,i})^2)\Big)\Big(\ex+\frac12\pi_1^T\log\Big(\1+(\eta_{1 }M_{T })^2T\Big)\Big)}_{=:A_T}\,.
\end{equation}
From \eqref{eq:pi0} we derive  
\[
-\eta_{T}\widetilde L_{T}=\eta_{T}\Big(\sum_{t=1}^T(\pi_t^T\boldsymbol\ell_t\1-\boldsymbol\ell_{t})-\sum_{t=1}^T\eta_{t-1}(\pi_t^T\boldsymbol\ell_t\1-\boldsymbol\ell_{t})^2\Big)\le \log(\pi_1^{-1} A_T)
\]
so that 
\[
\sum_{t=1}^T\pi_t^T\boldsymbol\ell_t\1\le \sum_{t=1}^T\boldsymbol\ell_{t}+\sum_{t=1}^T\eta_{t-1}(\pi_t^T\boldsymbol\ell_t\1-\boldsymbol\ell_{t})^2+\dfrac{\log(\pi_1^{-1})}{\eta_{T}}+\dfrac{\log(A_T)}{\eta_{T}}\,.
\]
Since $\eta_{t}^{-2}=\eta_{t-1}^{-2}+2.2(\boldsymbol\ell_t-\pi_t^T\boldsymbol\ell_t\1)^2$ we obtain by rearranging the sum
\[
\sum_{t=1}^T\eta_{t-1}(\pi_t^T\boldsymbol\ell_t\1-\boldsymbol\ell_{t})^2=\frac1{2.2}\sum_{t=1}^T \eta_{t-1}(\eta_{t}^{-2}-\eta_{t-1}^{-2})\le\frac1{2.2}\Big(\sum_{t=1}^T\dfrac{\eta_{t-1}-\eta_{t}}{\eta_{t}^2}+\dfrac{1}{\eta_{T}}\Big).
\]
Thus we derive from a comparison sum-integral
\begin{align*}
\sum_{t=1}^T\dfrac{\eta_{t-1}-\eta_{t}}{\eta_{t}^2} \le \dfrac1{\eta_{T}}\qquad\Longrightarrow\qquad
\sum_{t=1}^T\eta_{t-1}(\pi_t^T\boldsymbol\ell_t\1-\boldsymbol\ell_{t})^2&\le\dfrac{1}{1.1\eta_{T}}\,.
\end{align*}
The learning rate satisfying the relation
\[
\dfrac{1/1.1+\log(\pi_1^{-1})+\log(A_T)}{\eta_{T}}\le(1/1.1+\log(\pi_1^{-1})+\log(A_T))\sqrt{2.2\sum_{t=1}^T(\pi_t^T\boldsymbol\ell_t\1-\boldsymbol\ell_{t})^2}\,,
\]
and the regret bound \eqref{eq:boa} follows from the expression of
$\log(A_T)$.

It remains to prove the exponential inequality \eqref{eq:pi0}. We use the identity
\[
\exp(-\eta_{T}\widetilde L_{T})= \exp(\eta_{T} (\pi_{T}^T\boldsymbol\ell_{T}\1-\boldsymbol\ell_{T})-\eta_{T}^2(\boldsymbol\ell_T-\pi_T^T\boldsymbol\ell_T\1)^2)\exp(-\eta_{T}\widetilde L_{T-1})\,.
\]
To initiate the recursion, we use the basic inequality $x\le x^\alpha +e^{-1}(\alpha-1)/\alpha$ for $x\ge 0$ and $\alpha\ge 1$ with $x=\exp(-\eta_{T}\widetilde L_{T-1})$ and $\alpha=\eta_{T-1}/\eta_T$ so that
\[
\exp(-\eta_{T}\widetilde L_{T-1})\le \exp(-\eta_{T-1}\widetilde L_{T-1})+e^{-1}\dfrac{\eta_{T-1}-\eta_T}{\eta_{T-1}}\,.\\
\]
We obtain 
\[
\exp(-\eta_{T}\widetilde L_{T})\le \exp(\eta_{T} (\pi_{T}^T\boldsymbol\ell_{T}\1-\boldsymbol\ell_{T})-\eta_{T}^2(\boldsymbol\ell_T-\pi_T^T\boldsymbol\ell_T\1)^2)\Big( \exp(-\eta_{T-1}\widetilde L_{T-1})+e^{-1}\dfrac{\eta_{T-1}-\eta_T}{\eta_{T-1}}\Big)\\,.
\]
Then we use the expression
\[
\eta_T=\dfrac{\eta_{T-1}}{\sqrt{1+2.2\eta_{T-1}^2(\boldsymbol\ell_T-\pi_{T}^T\boldsymbol\ell_T\1)^2}}
\]
and the notation $x_T=\eta_{T-1}(\boldsymbol\ell_T-\pi_{T}^T\boldsymbol\ell_T\1)$ to derive
\[
\exp(-\eta_{T}\widetilde L_{T})\le \exp\Big(-\dfrac{x_T}{\sqrt{1+2.2x_T^2}}-\dfrac{x_T^2}{1+2.2 x_T^2}\Big)\Big( \exp(-\eta_{T-1}\widetilde L_{T-1})+\dfrac{\eta_{T-1}-\eta_T}{\eta_{T-1}}\Big)\,.
\]
We use different bounds over the function $\varphi:~y\in \R\mapsto \exp\Big(-\dfrac{y}{\sqrt{1+2.2y^2}}-\dfrac{y^2}{1+2.2y^2}\Big)$:\\
 $\varphi(y)\le e/2$, $\varphi(y)\le 1-\dfrac{y}{\sqrt{1+2.2y^2}}$ for any $y\in \R$ and $\varphi(y)\le 1-y$ if $y\le 1/4$.  Distinguishing whether $x_T$ is larger or not than $1/4$, we deduce 
\begin{align*}
\exp(-\eta_{T}\widetilde L_{T})\le &(\1-\eta_{T-1}(\boldsymbol\ell_T-\pi_{T}^T\boldsymbol\ell_T\1))\exp(-\eta_{T-1}\widetilde L_{T-1})\1\{{x_T\le1/4}\}\\
&+(\1-\eta_{T}(\boldsymbol\ell_T-\pi_{T}^T\boldsymbol\ell_T\1))\exp(-\eta_{T-1}\widetilde L_{T-1})\1\{{x_T>1/4}\}+1/2\dfrac{\eta_{T-1}-\eta_T}{\eta_{T-1}}\,.
\end{align*}
Using the relations $\eta_{T-1}/\eta_T\ge \1$ and $1-\dfrac{y}{\sqrt{1+y^2}}>0$, $y\in \R$ we upper bound the second term by
 \begin{align*}
\dfrac{\eta_{T-1}}{\eta_T}&(\1-\eta_{T}(\boldsymbol\ell_T-\pi_{T}^T\boldsymbol\ell_T\1))\exp(-\eta_{T-1}\widetilde L_{T-1})\1\{{x_T>1/4}\}\\
= & \Big(\dfrac{\eta_{T-1}}{\eta_T}- \eta_{T-1}(\boldsymbol\ell_T-\pi_{T}^T\boldsymbol\ell_T\1))\exp(-\eta_{T-1}\widetilde L_{T-1})\1\{{x_T>1/4}\}\,.
\end{align*}
Combining it with the previous bound we achieve
 \begin{align*}
\exp(-\eta_{T}\widetilde L_{T})\le & \Big(\dfrac{\eta_{T-1}}{\eta_T}\Big)^{\1\{{x_T>1/4}\}}\exp(-\eta_{T-1}\widetilde L_{T-1})\\
& -\eta_{T-1}(\boldsymbol\ell_T-\pi_{T}^T\boldsymbol\ell_T\1)\exp(-\eta_{T-1}\widetilde L_{T-1})+1/2\dfrac{\eta_{T-1}-\eta_T}{\eta_{T-1}}\,.
\end{align*}
The second inequality is obtained .  We have
\begin{align*}
\pi_1^T\exp(-\eta_{T}\widetilde L_{T})\le &\Big\| \Big(\dfrac{\eta_{T-1}}{\eta_T}\Big)^{\1\{{x_T>1/4}\}}\Big\|_\infty\pi_1^T\exp(-\eta_{T-1}\widetilde L_{T-1})\\
& -\Big(\pi_1\eta_{T-1}\exp(-\eta_{T-1}\widetilde L_{T-1})\Big)^T(\boldsymbol\ell_T-\pi_{T}^T\boldsymbol\ell_T\1)+1/2\pi_1^T\dfrac{\eta_{T-1}-\eta_T}{\eta_{T-1}}\,.
\end{align*}
We recognize the weights
\[
\pi_1\eta_{T-1}\exp(-\eta_{T-1}\widetilde L_{T-1})=\pi_T\big(\pi_1^T\eta_{T-1}\exp(-\eta_{T-1}\widetilde L_{T-1})\big)
\]
and the second term in the upper bound is proportional to $\pi_T^T(\boldsymbol\ell_T-\pi_{T}^T\boldsymbol\ell_T\1)=0$ and thus vanishes. We obtain
\[
\pi_1^T\exp(-\eta_{T}\widetilde L_{T})\le\Big\| \Big(\dfrac{\eta_{T-1}}{\eta_T}\Big)^{\1\{{x_T>1/4}\}}\Big\|_\infty\pi_1^T\exp(-\eta_{T-1}\widetilde L_{T-1})+1/2\pi_1^T\dfrac{\eta_{T-1}-\eta_T}{\eta_{T}}\]
and a recursive argument yields
\begin{multline*}
\pi_1^T\exp(-\eta_{T}\widetilde L_{T})\le\exp\Big(\sum_{t=2}^T\Big\|\log\Big(\dfrac{\eta_{t-1}}{ \eta_t}\Big)\1\{{x_t>1/4}\}\Big\|_\infty\Big)\Big(\pi_1^T\exp\Big(-\eta_{1}\widetilde L_{1}\Big)+1/2\sum_{t=2}^T\pi_1^T\dfrac{\eta_{T-1}-\eta_T}{\eta_{T-1}}\Big).
\end{multline*}
We bound the exponent term such as 
\begin{align*}
\sum_{t=2}^T\Big\|\log\Big(\dfrac{\eta_{t-1}}{ \eta_t}\Big)\1\{x_t>1/4\}\Big\|_\infty&\le\sum_{i=1}^K\sum_{t=2}^T\log\Big( \dfrac{\eta_{t-1,i}}{\eta_{t,i}}\Big) \1\{x_{t,i}>1/4\}\\
&\le \sum_{i=1}^K1\big\{\max_{2\le t\le T}x_{t,i}>1/4\big\} \Big(\sum_{t=2}^T\log \Big(\dfrac{\eta_{t-1,i}}{\eta_{t,i}}\Big)\1\{x_{t,i}>1/4\} \Big)\\
&\le \sum_{i=1}^K1\big\{\max_{2\le t\le T}x_{t,i}>1/4\big\} \Big( \log\Big( \dfrac{\eta_{1,i}}{\eta_{T-1,i}} \Big)+ \log\Big( \dfrac{\eta_{T-1,i}}{\eta_{T,i}} \Big)\Big)
\end{align*}
assuming with no loss of generality that if $\max_{2\le t\le T}x_{t,i}>1/4$ then it happens for the last iterate $x_{T,i} =\eta_{T-1,i}\ell_{T,i}>1/4$. Notice also that $x_{T,i}>1/4$ implies  that $\eta_{T-1,i}^{-1}\le M_{T,i}/4$. Combined with
\[
\dfrac{\eta_{T-1,i}}{\eta_{T,i}}=\sqrt{1+2.2\eta_{T-1,i}^2(\boldsymbol\ell_{T,i}-\pi_{T}^T\boldsymbol\ell_T)^2}\le \sqrt{1+2.2\eta_{1,i}^2M_{T,i}^2}\,,
\]
we obtain  
\begin{align*}
\sum_{t=2}^T\Big\|\log\Big(\dfrac{\eta_{t-1}}{ \eta_t}\Big)\1\{x_t>1/4\}\Big\|_\infty&\le \sum_{i=1}^K1\big\{\max_{2\le t\le T}x_{t,i}>1/4\big\} \Big( \log(\eta_{1,i}M_{T,i}/4)+\frac12 \log\big(1+2.2\eta_{1,i}^2M_{T,i}^2\big)\Big)\\
&\le \sum_{i=1}^K1\big\{\max_{2\le t\le T}x_{t,i}>1/4\big\} \log\Big(1+\eta_{1,i}^2M_{T,i}^2\Big)\,.
\end{align*}
We have $\exp(-\eta_{1}\widetilde L_{1})\le \exp(\1)$ using the relation  $|\eta_{1}\widetilde L_{1}|=\1$ and the comparison sum-integral
\[
\sum_{t=2}^T\dfrac{\eta_{t-1}-\eta_{t}}{\eta_{t-1}}\le \log(\eta_1/\eta_T)=\frac12\log\Big(\1+(\eta_{1 }M_{T })^2T\Big)
\]
we achieve \eqref{eq:pi0}.

\subsection{Proof of Theorem \ref{th:boa}}
From the regret bound  \eqref{eq:boa}, keeping the notation of \eqref{eq:pi0} and applying Young's inequality, we infer that for any $\eta>0$
\[
\sum_{t=1}^T\pi_t^T\boldsymbol\ell_t- \sum_{t=1}^T\boldsymbol\ell_{t,i}\le \dfrac{\eta}2\sum_{t=1}^T(\pi_t^T\boldsymbol\ell_t-\boldsymbol\ell_{t,i})^2+\dfrac{(1/1.1+\log(\pi_1^{-1})+\log(A_T))^2}{2\eta} .
\]
Plugging this bound into \eqref{eq:surrog} and identifying $\boldsymbol \ell_t= \mathbf x_t^T \nabla\ell_t(\mathbf x_t\pi_t)$ and $\widehat x_t=\mathbf x_t\pi_t$ we obtain
\begin{align*}
\sum_{t=1}^TL_t(\widehat x_t)-\sum_{t=1}^TL_t( x_t^{(i)})\le& \frac{\lambda+\eta}2\sum_{t=1}^T\nabla \ell_t(\widehat x_t)^T(\widehat x_t-x_t^{(i)})^2\nonumber\\
&+\frac{\lambda-\alpha}2\sum_{t=1}^T\E_{t-1}[(\nabla \ell_t(\widehat x_t)^T(\widehat x_t-x_t^{(i)})^2]\\
&+\dfrac{(1/1.1+\log(\pi_1^{-1})+\log(A_T))^2}{2\eta}+\frac2\lambda\log(\delta^{-1})\,.
\end{align*}
Applying once again the Poissonian inequality \eqref{eq:poisson}, using that the diameter of the simplex satisfies is less than $1$, we derive that with probability $1-\delta$
\[
\sum_{t=1}^T(\nabla \ell_t(\widehat x_t)^T(\widehat x_t-x_t^{(i)})^2\le 2\sum_{t=1}^T\E_{t-1}[ (\nabla \ell_t(\widehat x_t)^T(\widehat x_t-x_t^{(i)})^2]+2(G_{\psi_2}D)^2\log(\delta^{-1})\,.
\]
Then we obtain
\begin{align*}
\sum_{t=1}^TL_t(\widehat x_t)-\sum_{t=1}^TL_t( x_t^{(i)})\le& \frac{3\lambda+2\eta-\alpha}2\sum_{t=1}^T\E_{t-1}[(\nabla \ell_t(\widehat x_t)^T(\widehat x_t-x_t^{(i)})^2]\\
&+\dfrac{(1/1.1+\log(\pi_1^{-1})+\log(A_T))^2}{2\eta}+\Big((\lambda+\eta)(G_{\psi_2}D)^2+\frac2\lambda\Big)\log(\delta^{-1})\,.
\end{align*}
Thus choosing $\lambda=\eta=\alpha/3$ and introducing  $\nabla \ell_t(\hat x_t)$ for bounding roughly $\log(A_T)$, we obtain
\begin{multline*}
\sum_{t=1}^TL_t(\widehat x_t)-\sum_{t=1}^TL_t( x_t^{(i)})\le  \dfrac{3}\alpha \Big(K\log\Big(1+\dfrac{\max_{1\le t\le T}\|\nabla \ell_t(\hat x_t)\|^2}{\underline m^ 2}\Big)\\
+\log\Big(\ex+\log\Big(1+\dfrac{\max_{1\le t\le T}\|\nabla \ell_t(\hat x_t)\|^2}{\underline m^ 2}T\Big)\Big)+1/1.1+\log(\pi_1^{-1})\Big)^2+\Big(\dfrac{2\alpha}3(G_{\psi_2}D)^2+\frac6\alpha\Big)\log(\delta^{-1})\,.
\end{multline*}
From the proof Proposition \ref{prop:pois} on the second-order terms $0\le Y_t=\|\nabla \ell_t(\widehat x_t)\|^2/(2G_{\psi_2}^2)$ we obtain
\[
\E_{t-1}[\exp(Y_t)]\le 1+2\E_{t-1}[Y_t^2]\le 1+(G_2/G_{\psi_2})^2\,.
\] 
Thus, for any $x>0$ we have
\[
\P\Big(\max_{1\le t\le T} Y_t>x\Big)\le \E[\exp(\max Y_t)]\exp(-x)\le \sum_{t=1}^T \E[\exp( Y_t)]\exp(-x)\le T(1+(G_2/G_{\psi_2})^2)\exp(-x)
\]
and with probability $1-\delta$ it holds
\[
\max_{1\le t\le T} Y_t \le \log(T)+\log(1+(G_2/G_{\psi_2})^2)+\log(\delta^{-1})\,.
\]
Finally, we obtain the desired result using a union bound.

\subsection{Proof of Proposition \ref{prop:mean}}

We denote 
\[
\overline{y}^M_{t-1,t-p}=((y_{t-1}\wedge M/2)\vee (-M/ 2),\ldots,(y_{t-p}\wedge M/ 2)\vee (-M/ 2))\in\R^p.
\]
Let $P_x={\cal N}(\widehat m_t^{(p)}(x),\sigma^2)$ then $\ell_t(x)=(y_t-\widehat m_t^{(p)}(x))^2/(2\sigma^2)$ (plus constant) and
\begin{align*}
\E_{t-1}[\nabla \ell_t(x)\nabla \ell_t(x)^T]&=\dfrac{\E_{t-1}[(y_t-\widehat m_t^{(p)}(x))^2]}{\sigma^4}\overline{y}^M_{t-1,t-p}(\overline{y}^M_{t-1,t-p})^T\,,\\
\E_{t-1}[\nabla^2 \ell_t(x)]&=\dfrac{1}{\sigma^2}\overline{y}^M_{t-1,t-p}(\overline{y}^M_{t-1,t-p})^T\,.
\end{align*}
Because the second derivatives do not depend on $x$ a Taylor expansion provides
\begin{align*}
L_t(y)&=L_t(x)+\nabla L_t(y)^T(y-x)-\dfrac{1}{\sigma^2}(y-x)^T\overline{y}^M_{t-1,t-p}(\overline{y}^M_{t-1,t-p})^T(y-x)\\
&=L_t(x)+\nabla L_t(y)^T(y-x)-\dfrac{1}{\sigma^2}(\widehat m_t^{(p)}(y)-\widehat m_t^{(p)}(x))^2\\
&\le L_t(x)+\nabla L_t(y)^T(y-x)-\dfrac{\E_{t-1}[(y_t-\widehat m_t^{(p)}(x))^2]}{\sigma^2(\overline\sigma^2+M^2)}(\widehat m_t^{(p)}(y)-\widehat m_t^{(p)}(x))^2\\
&\le L_t(x)+\nabla L_t(y)^T(y-x)-\dfrac{\sigma^2}{\overline\sigma^2+M^2}(y-x)^T\E_{t-1}[\nabla \ell_t(x)\nabla \ell_t(x)^T](y-x)\,.
\end{align*}
The first inequality comes from the relations
\[
\E_{t-1}[(y_t-\widehat m_t^{(p)}(x))^2]=\E_{t-1}[(y_t- m_t)^2]+(m_t-\widehat m_t^{(p)}(x))^2\le \overline\sigma^2+M^2\,.
\]
Thus Condition {\bf (H2)} is satisfied with $\alpha=\sigma^2/(\overline \sigma^2+M^2)$.\\

Applying Theorem \ref{th:ons}, the ONS achieves the stochastic regret against every $x\in B_1(1)$ (satisfying $\|x\|\le \sqrt p$)
\[
\sum_{t=1}^TL_t(x_t)- \sum_{t=1}^TL_t(x)\le O\Big(\frac{\overline \sigma^2+M^2}{\sigma^2}p \log T + \Big(\frac{\overline \sigma^2+M^2}{\sigma^2}+\frac{\sigma^2}{\overline \sigma^2+M^2}p\, G_{\psi_2}^2\Big)\log(\delta^{-1}))\Big)
\]
with high probability. Since the risk satisfies the relation
\[
L_t(x)=\frac12\Big(\log(2\pi)+\log(\sigma^2)+\dfrac{(m_t-\widehat m_t^{(p)}(x))^2+\sigma^2_t}{\sigma^2}\Big)=KL(P_t,\mathcal N( m_t^{(p)}(x),\sigma^2)) + cst.\,,
\]
we obtain the desired result.

\subsection{Proof of Proposition \ref{prop:vol}}

 We denote 
\[
\overline{y}^{2,\overline \sigma}_{t-1,t-q}=(y_{t-1}^2\wedge \overline \sigma^2,\ldots,y_{t-q}^2\wedge \overline \sigma^2)\in\R^q.
\]
Let $P_x={\cal N}(0,\widehat\sigma_t^{2,(q)}(x))$ then $\ell_t(x)=(\log(\widehat\sigma_t^{2,(q)}(x))+y_t^2/\widehat\sigma_t^{2,(q)}(x))/2$ and
\begin{align*}
\E_{t-1}[\nabla \ell_t(x)\nabla \ell_t(x)^T]&=\dfrac{\E_{t-1}[(y_t^2-\widehat\sigma_t^{2,(q)}(x))^2]}{2\big(\widehat\sigma_t^{2,(q)}(x)\big)^4}\overline{y}^{2,\overline \sigma}_{t-1,t-q}(\overline{y}^{2,\overline \sigma}_{t-1,t-q})^T\,,\\
\E_{t-1}[\nabla^2 \ell_t(x)]&=\dfrac{2\sigma_t^2-\widehat\sigma_t^{2,(q)}(x)}{2\big(\widehat\sigma_t^{2,(q)}(x)\big)^3}\overline{y}^{2,\overline \sigma}_{t-1,t-q}(\overline{y}^{2,\overline \sigma}_{t-1,t-q})^T\,.
\end{align*}
Because $1/2\le \sigma_t^2/\widehat\sigma_t^{2,(q)}(x)\le 2$ under our assumptions, the second derivatives are decreasing in $\widehat\sigma_t^{2,(q)}(x)$ and thus 
 \[
 \E_{t-1}[\nabla^2 \ell_t(x)]\succeq \dfrac{2  \sigma^2_t-\overline \sigma^{2}}{2\overline\sigma^6}\overline{y}^{2,\overline \sigma}_{t-1,t-q}(\overline{y}^{2,\overline \sigma}_{t-1,t-q})^T\succeq \dfrac{c-1}{2\overline\sigma^4}\overline{y}^{2,\overline \sigma}_{t-1,t-q}(\overline{y}^{2,\overline \sigma}_{t-1,t-q})^T\,.
 \]
Combining this lower bound with a Taylor expansion, we obtain
\begin{align*}
L_t(y)&\le L_t(x)+\nabla L_t(y)^T(y-x)-\dfrac{c-1}{2\overline\sigma^4}(y-x)^T\overline{y}^{2,\overline \sigma}_{t-1,t-q}(\overline{y}^{2,\overline \sigma}_{t-1,t-q})^T(y-x)\\
&\le L_t(x)+\nabla L_t(y)^T(y-x)-\dfrac{(c-1)\E_{t-1}[(y_t^2-\widehat\sigma_t^{2,(q)}(x))^2]}{2\overline \sigma^4(3 \overline\sigma^4+\overline\sigma^4)}(\widehat\sigma_t^{2,(q)}(y)-\widehat\sigma_t^{2,(q)}(x))^2\\
&\le L_t(x)+\nabla L_t(y)^T(y-x)-\dfrac{(c-1)(c\overline\sigma^2/2)^4}{4\overline \sigma^8}(y-x)^T\E_{t-1}[\nabla \ell_t(x)\nabla \ell_t(x)^T](y-x)\,.
\end{align*}
The second inequality comes from the relations
\[
\E_{t-1}[(y_t^2-\widehat\sigma_t^{2,(q)}(x))^2]=\E_{t-1}[(y_t^2- \sigma_t^2)^2]+(\sigma_t^2-\widehat\sigma_t^{2,(q)}(x))^2\le 3\overline\sigma^4+\overline \sigma^2\,.
\]
Thus Condition {\bf (H2)} is satisfied with $\alpha=(c-1)c^42^{-6}$.\\

Applying Theorem \ref{th:ons}, the ONS achieves the stochastic regret with high probability 
against every $x\in {\cal K}$ (satisfying $\|x\|\le \sqrt q$)
\[
\sum_{t=1}^TL_t(x_t)- \sum_{t=1}^TL_t(x)\le O( q \log T +(1+qG_{\psi_2}^2)  \log(\delta^{-1})).
\]
We conclude the proof by identifying the KL divergence with $L_t$ up to additive constants.

\subsection*{Proof of Proposition \ref{prop:vol}}
We use similar arguments than in the proofs of Propositions \ref{prop:mean} and \ref{prop:vol}, keeping the same notation with 
\[
\ell_t(x)=\dfrac12\Big(\log(\widehat\sigma_t^{2,(q)}(x_{p+1:p+q}))+\dfrac{(y_t-\widehat m_t^{(p)}(x_{1:p}))^2}{\widehat\sigma_t^{2,(q)}(x_{p+1:p+q})}\Big)\,.
\]
Adapting previous computations, we similarly obtain a lower bound on the second derivatives
 \[
 \E_{t-1}[\nabla^2 \ell_t(x)] \succeq \dfrac{c-1}{2\overline\sigma^4}\big(\overline{y}^M_{t-1,t-p},\overline{y}^{2,\overline \sigma}_{t-1,t-q}\big)\big(\overline{y}^M_{t-1,t-p},\overline{y}^{2,\overline \sigma}_{t-1,t-q}\big)^T\,.
 \]
We can also upper bound the first derivatives to obtain
\[
 \E_{t-1}[\nabla \ell_t(x)\nabla \ell_t(x)^T]\preceq \dfrac{2^4}{2 (c\overline \sigma^2)^4}\E_{t-1}[(\widehat\sigma_t^{2,(q)}-(y_t-\widehat m_t^{(p)}(x_{1:p}))^2)^2] \big(\overline{y}^M_{t-1,t-p},\overline{y}^{2,\overline \sigma}_{t-1,t-q}\big)\big(\overline{y}^M_{t-1,t-p},\overline{y}^{2,\overline \sigma}_{t-1,t-q}\big)^T\,.
\]
Under our assumptions, we roughly estimate
\begin{align*}
\E_{t-1}[(\widehat\sigma_t^{2,(q)}-(y_t-\widehat m_t^{(p)}(x_{1:p}))^2)^2]&\le 2\big(\big(\widehat\sigma_t^{2,(q)}\big)^2 +\E_{t-1}\big[(y_t-\widehat m_t^{(p)}(x_{1:p}))^4\big]\big)\\
&\le 2\big(\overline\sigma^4+2\big(\E_{t-1}\big[(y_t-  m_t )^4\big]+\E_{t-1}\big[(m_t-\widehat m_t^{(p)}(x_{1:p}))^4\big]\big)\big)\\
&\le 18\overline\sigma^4\,.
\end{align*}
Thus Condition {\bf (H2)} is satisfied with $\alpha=(c-1)c^43^{-2}2^{-5}$.\\

Applying Theorem \ref{th:ons}, the ONS achieves the stochastic regret with high probability 
against every $x\in {\cal K}$ (satisfying $\|x\|\le D=\sqrt{p+q}$)
\[
\sum_{t=1}^TL_t(x_t)- \sum_{t=1}^TL_t(x)\le O( (p+q) \log T +(1+(p+q)G_{\psi_2}^2)  \log(\delta^{-1})).
\]
We conclude the proof by identifying the KL divergence with $L_t$ up to additive constants.

\vspace*{-10pt}

\section{Auxiliary results}

\subsection{The stopping time argument of \cite{ville1939etude} and \cite{freedman1975tail}}\label{app:freedman}
We recall the argument of \cite{ville1939etude} and \cite{freedman1975tail} as we apply it several times in the proofs of the paper. Consider $M_T=\exp(\sum_{t=1}^TZ_t)$ for any $Z_t$ adapted to a filtration $\mathcal F_t$ and satisfying the exponential inequality $\E[\exp(Z_t)\mid \mathcal F_{t-1}]\le 1$. Then we have
\[
\P\Big(\exists T\ge 1 : \sum_{t=1}^TZ_t>\log(\delta^{-1})\Big)\le \delta
\]
for any $0<\delta<1$ by applying the following lemma.

\begin{lemma}
If $M_t$ is adapted to $\mathcal F_t$, $M_0=1$ a.s. and $\E[M_t\mid \mathcal F_{t-1}]\le M_{t-1}$ a.s., $t\ge1$, then, for any $0<\delta<1$, it holds 
\[
\P(\exists T\ge 1:\; M_T >\delta^{-1})\le \delta\,.
\]
\end{lemma}
\begin{proof}
We apply the optional stopping theorem with Markov's inequality defining the stopping time $\tau = \inf\{t>1:\, M_t>\delta^{-1}\}$ so that
\[
\P(\exists t\ge 1:\, M_t >\delta^{-1})=\P(M_\tau>\delta^{-1})\le \E[M_\tau] \delta\le \E[M_0] \delta\le \delta \,.
\]
\end{proof}

\subsection{SOCO analysis of the OGD algorithm}\label{sec:ogd}

In this section we work under {\bf (H1)} and {\bf (H2)}  with $\alpha=0$. Proposition \ref{prop:surrogate} holds, $\lambda>0=\alpha$ and the compensator term in Proposition \ref{prop:surrogate} is positive. In this section we assume that the gradients are bounded by $G<\infty$. A slow rate stochastic regret bound $O(GD\sqrt T)$ is expected and the surrogate loss in Proposition \ref{prop:surrogate} is useless. The classical Online Gradient Descent (OGD) of \cite{zinkevich2003online}
\[
x_{t+1}=\arg\min_{x\in{\cal K}}\Big\|x-\frac{D}{G\sqrt t}\,\nabla\,\ell_t(x_t)\Big\|\qquad \mbox{starting from }x_0\in\cal K\,,
\]
satisfies the following linearized regret bound in any SOCO problem, see the proof in \cite{hazan2016introduction} that does not use any convex assumption,
\[
 \sum_{t=1}^T\nabla \ell_t(x_t)^T(x_t-x)\le \frac32 DG\sqrt T\,.
\]
Under {\bf (H1)} we easily bound a.s. both extra quadratic terms in Proposition \ref{prop:surrogate} with the same quantity $\lambda /2\,G^2D^2 T$. Choosing $\lambda = \sqrt{2\log(\delta^{-1})}/(GD \sqrt T)$ we immediately obtain a new slow rate stochastic regret bound for the OGD valid in any SOCO problem:
\begin{theorem}\label{th:ogd}
Assume that {\bf (H1)} holds and that $\sup_{x\in \cal K}\|\nabla \ell_t(x)\|\le G$ a.s., $t\ge1$. The OGD algorithm satisfies with probability $1-\delta$ the stochastic regret bound
\[
\sum_{t=1}^TL_t(x_t)-\sum_{t=1}^TL_t(x)\le \Big(\frac32+2\sqrt{2 \log(\delta^{-1})}\Big) DG\sqrt T
\]
valid for any $T\ge 1$ and any $x\in {\cal K}$.
\end{theorem}
This simple extension of the usual iid setting to any stochastic adversarial setting could be obtained by classical arguments such as Azuma's inequality used in Chapter 9 of \cite{hazan2016introduction}. It relies on the martingale $\sum_{t=1}^T(\nabla L_t(x_t)-\nabla \ell_t(x_t))^T(x_t-x^*)$ and the gradient trick on $L_t$ to remove the assumption of convexity on the losses $\ell_t$.

\end{document}